\DeclareFontFamily{T1}{cbgreek}{}
\DeclareFontShape{T1}{cbgreek}{m}{n}{<-6>  grmn0500 <6-7> grmn0600 <7-8> grmn0700 <8-9> grmn0800 <9-10> grmn0900 <10-12> grmn1000 <12-17> grmn1200 <17-> grmn1728}{}
\DeclareSymbolFont{quadratics}{T1}{cbgreek}{m}{n}
\DeclareMathSymbol{\qoppa}{\mathord}{quadratics}{19}
\DeclareMathSymbol{\Qoppa}{\mathord}{quadratics}{21}
\newtheorem{lemma}{Lemma}
\newtheorem{proposition}{Proposition}
\newtheorem{corollary}{Corollary}
\newtheorem{definition}{Definition}
\newtheorem{remark}{Remark}
\newtheorem{assumption}{Assumption}
\def\b{\ensuremath\boldsymbol}
\newcommand{\beq}{\begin{equation*}}
\newcommand{\eeq}{\end{equation*}}
\begin{document}

\title{On Manifold Hypothesis: Hypersurface Submanifold Embedding Using Osculating Hyperspheres}

\author{Benyamin Ghojogh, Fakhri Karray, Mark Crowley}

\maketitle
	
\begin{abstract}
Consider a set of $n$ data points in the Euclidean space $\mathbb{R}^d$. This set is called dataset in machine learning and data science. Manifold hypothesis states that the dataset lies on a low-dimensional submanifold with high probability. All dimensionality reduction and manifold learning methods have the assumption of manifold hypothesis. In this paper, we show that the dataset lies on an embedded hypersurface submanifold which is locally $(d-1)$-dimensional. Hence, we show that the manifold hypothesis holds at least for the embedding dimensionality $d-1$. Using an induction in a pyramid structure, we also extend the embedding dimensionality to lower embedding dimensionalities to show the validity of manifold hypothesis for embedding dimensionalities $\{1, 2, \dots, d-1\}$. For embedding the hypersurface, we first construct the $d$ nearest neighbors graph for data. For every point, we fit an osculating hypersphere $S^{d-1}$ using its neighbors where this hypersphere is osculating to a hypothetical hypersurface. Then, using surgery theory, we apply surgery on the osculating hyperspheres to obtain $n$ hyper-caps. We connect the hyper-caps to one another using partial hyper-cylinders. By connecting all parts, the embedded hypersurface is obtained as the disjoint union of these elements. We discuss the geometrical characteristics of the embedded hypersurface, such as having boundary, its topology, smoothness, boundedness, orientability, compactness, and injectivity. Some discussion are also provided for the linearity and structure of data. This paper is the intersection of several fields of science including machine learning, differential geometry, and algebraic topology. 
\end{abstract}

\section{Introduction}









Suppose we have a set of data points in a multi-dimensional Euclidean space. 
This paper is motivated by the following question. Do the points of this dataset lie on a submanifold? This question is answered by the manifold hypothesis \cite{fefferman2016testing}. According to the manifold hypothesis, the data points most often lie on a submanifold with high probability. This is because the data points usually represent some natural signal such as image. When the data acquisition process is natural, this hypothesis makes more sense because the data will have some structure. 

The manifold hypothesis is the assumption of all manifold learning and dimensionality reduction methods in the fields of machine learning and data science. They all assume that data points lie on some submanifold with an intrinsic dimensionality. These methods try to find this submanifold with different approaches. 
The linear manifold learning methods, such as principal component analysis \cite{ghojogh2019unsupervised}, classical multidimensional scaling \cite{cox2008multidimensional}, and Fisher discriminant analysis \cite{fisher1936use}, assume that this submanifold is linear. However, the nonlinear manifold learning methods, such as Isomap \cite{tenenbaum2000global}, locally linear embedding \cite{roweis2000nonlinear}, and Laplacian eigenmap \cite{belkin2003laplacian}, do not have such assumption. 

A comprehensive algorithm has been proposed for testing whether manifold hypothesis holds for some dataset, with high probability \cite{fefferman2016testing}. In its formulation, it uses the Johnson-Lindenstrauss lemma \cite{johnson1984extensions}, which is also used in random projection \cite{ghojogh2021johnson}. 
Here, in this paper, we discuss the manifold hypothesis by hypersurface submanifold embedding using osculating hyperspheres. 
Section \ref{section_definitions} reviews and provides some definitions. The proposed discussion on hypersurface embedding is explained in Section \ref{section_hypersurface_embedding}. We provide a numerical example in Section \ref{section_example}. The geometrical characteristics of the embedded hypersurface are discussed in Section \ref{section_geometrical_characteristics}.
In Section \ref{section_induction}, we use induction in a pyramid structure for extension to lower embedding dimensionalities. 
Some discussions on the linearity and structure of data are provided in Section \ref{section_discussions}. 
Finally, Section \ref{section_conclusion} concludes the paper. 

\section{Definitions and Background}\label{section_definitions}

\subsection{Manifold and Submanifold}

\begin{definition}[Topology and topological space \cite{lee2010introduction,kelley2017general}]
Let $\mathcal{X}$ be a set. A topology on $\mathcal{X}$ is a collection $\mathcal{T}$ of subsets $\mathcal{X}$, called open sets, satisfying:
\begin{itemize}
\item $\varnothing, \mathcal{X} \in \mathcal{T}$ 
\item If $U_1, \dots, U_k \in \mathcal{T}$, then $\bigcap_{j=1}^k U_j \in \mathcal{T}$. In other words, finite intersections of open sets are open. 
\item If $U_\alpha \in \mathcal{T}, \forall \alpha \in A$ (where $A$ is the index set of topology), then $\bigcup_{\alpha \in A} U_\alpha \in \mathcal{T}$. In other words, arbitrary unions of open sets are open. 
\end{itemize}
The pair $(\mathcal{X}, \mathcal{T})$ is called a topological space associated with the topology $\mathcal{T}$. 
\end{definition}

\begin{definition}[Hausdorff space \cite{lee2010introduction,kelley2017general}]
A topological space $(\mathcal{X}, \mathcal{T})$ is Hausdorff if and only if for $x_1, x_2 \in X$, $x_1 \neq x_2$, we have:
\begin{align}
\exists\, \text{open sets } U,V \text{ such that } x_1 \in U,\, x_2 \in V,\, U \cap V = \varnothing.
\end{align}
In other words, the points of a Hausdorff topological space are separable and distinguishable. 
\end{definition}

\begin{definition}[Topological manifold \cite{lee2010introduction}]
A topological space $(\mathcal{X}, \mathcal{T})$ is a topological manifold of dimension $d$, for $d \in \mathbb{Z}_{\geq 0}$, also called a topological $d$-manifold, if all the following conditions hold:
\begin{itemize}
\item $(\mathcal{X}, \mathcal{T})$ is Hausdorff. 
\item $(\mathcal{X}, \mathcal{T})$ has a countable basis. 
\item $(\mathcal{X}, \mathcal{T})$ is locally homeomorphic to $d$-dimensional Euclidean space, $\mathbb{R}^d$. 
\end{itemize}
\end{definition}

\begin{definition}[Chart \cite{lee2010introduction}]
Consider a topological manifold $\mathcal{M} := (\mathcal{X}, \mathcal{T})$. It is locally homeomorphic to $\mathbb{R}^d$, meaning that for all $x \in X$, there exists an open set $U$ containing $x$ and a homeomorphism $\phi: U \rightarrow \phi(U)$ where $\phi(U)$ is an open subset of $\mathbb{R}^d$. Such mapping is denoted by $\phi: U \overset{\cong}{\longrightarrow} \phi(U)$ and the tuple $(U, \phi)$ is called a coordinate chart, or a chart in short, for $\mathcal{M}$. 
\end{definition}

\begin{definition}[Smooth atlas \cite{lee2013smooth}]
A smooth atlas $\mathcal{A}$ for a topological $d$-manifold $\mathcal{M}$ is a collection of charts $(U_\alpha, \phi_\alpha)$ for $\mathcal{M}$ such that:
\begin{itemize}
\item They cover $\mathcal{M}$, i.e., $\bigcup_{\alpha \in A} U_\alpha = \mathcal{M}$.
\item Any two charts in this collection are smoothly compatible (n.b. two charts $(U, \phi)$ and $(V, \psi)$ are smoothly compatible if the mapping $\psi \circ \phi^{-1}$ is a diffeomorphism).
\end{itemize}
\end{definition}

\begin{definition}[Maximal atlas \cite{lee2013smooth}]
A smooth atlas $\mathcal{A}$ for a topological $d$-manifold $\mathcal{M}$ is maximal if it is not contained in any other smooth atlas for $\mathcal{M}$. 
\end{definition}

\begin{definition}[Smooth manifold \cite{lee2013smooth}]
A smooth manifold $\mathcal{M}$ of dimension $d$, also called a smooth $d$-manifold, is a topological $d$-manifold together with a choice of maximal smooth atlas $\mathcal{A}$ on $\mathcal{M}$. 
\end{definition}

\begin{definition}[Compact topological space \cite{lee2010introduction,lee2013smooth}]\label{definition_compact_topological_space}
A topological space $(\mathcal{X}, \mathcal{T})$ is compact if $\mathcal{X}$ is the union of a collection of open sets and there exists a finite sub-collection whose union is $\mathcal{X}$. In other words, in a compact topological space, $\mathcal{X}$ has a finite sub-cover. 
A topological manifold is compact if it is a compact topological space. Compact manifolds are usually manifolds without boundary. 
\end{definition}

\begin{definition}[Subspace topology \cite{lee2010introduction,lee2013smooth}]
Consider a topological space $(\mathcal{X}, \mathcal{T})$ and a subset $\mathcal{S} \subseteq \mathcal{X}$. The subspace topology on $\mathcal{S}$ is defined as $\mathcal{T}_{\mathcal{S}} := \{\mathcal{S} \cap U \,|\, U \in \mathcal{T}\}$. There is an inclusion map $\mathcal{S} \hookrightarrow \mathcal{X}$ for a subspace topology. 
\end{definition}

\begin{definition}[Submersion, immersion, and embedding {\cite[Chapter 4]{lee2013smooth}}]
Let $\mathcal{M}$ and $\mathcal{N}$ be two smooth manifolds. 
\begin{itemize}
\item A smooth map $F: \mathcal{M} \rightarrow \mathcal{N}$ is a smooth submersion if its differential is surjective, i.e., $\text{rank}(F) = \text{dim}(\mathcal{N})$, where $\text{dim}(.)$ denotes the local dimensionality of manifold. In submersion, we have $\text{dim}(\mathcal{M}) \geq \text{dim}(\mathcal{N})$.
\item A smooth map $F: \mathcal{M} \rightarrow \mathcal{N}$ is a smooth immersion if its differential is injective, i.e., $\text{rank}(F) = \text{dim}(\mathcal{M})$. In immersion, we have $\text{dim}(\mathcal{M}) \leq \text{dim}(\mathcal{N})$.
\item A smooth map $F: \mathcal{M} \rightarrow \mathcal{N}$ is a topological embedding if it is a homeomorphism to its image $F(\mathcal{M}) \subseteq \mathcal{N}$ in the subspace topology. 
\item A smooth map $F: \mathcal{M} \rightarrow \mathcal{N}$ is a smooth embedding if it is both a smooth immersion and a topological embedding. An example smooth embedding is the inclusion map $\mathcal{S} \hookrightarrow \mathcal{M}$ where $\mathcal{S} \subseteq \mathcal{M}$.
\end{itemize}
\end{definition}

\begin{definition}[Embedded submanifold {\cite[Chapter 5]{lee2013smooth}}]
Let $\mathcal{M}$ be a smooth manifold. An embedded submanifold of $\mathcal{M}$ is a subset $\mathcal{S} \subseteq \mathcal{M}$ which is itself a manifold endowed with a smooth structure where the inclusion map $\mathcal{S} \hookrightarrow \mathcal{M}$ is a smooth embedding.
The quantity $\text{dim}(\mathcal{M}) - \text{dim}(\mathcal{S})$ is called the codimension of $\mathcal{S}$ in $\mathcal{M}$.
\end{definition}

For more information on topological and smooth manifolds, the reader can refer to \cite{lee2010introduction} and \cite{lee2013smooth}, respectively. 

\subsection{Hypersurface and Hypersphere}

\begin{definition}[Embedded hypersurface {\cite[Chapter 5]{lee2013smooth}}]\label{definition_embedded_hypersurface}
An embedded hypersurface is an embedded submanifold $\mathcal{S} \subseteq \mathcal{M}$ with the inclusion map $\mathcal{S} \hookrightarrow \mathcal{M}$, whose codimension is one. 
For example, if $\mathcal{M} = \mathbb{R}^d$, an embedded hypersurface is locally homeomorphic to $\mathbb{R}^{d-1}$.
Note that a hypersurface is topologically homeomorphic to a hyperplane; although, it is not necessarily linear. 
\end{definition}

\begin{definition}[$d$-sphere]\label{definition_d_sphere}
The $d$-sphere, denoted by $S^d$, is a hypersphere which is locally $d$-dimensional and is embedded in $\mathbb{R}^{d+1}$, i.e.:
\begin{align}
S^d := \{\b{x} \in \mathbb{R}^{d+1} \,|\, \Vert \b{x} \Vert = 1\},
\end{align}
where $\Vert \cdot \Vert$ denotes a norm in the Euclidean space. 
\end{definition}

\begin{lemma}\label{lemma_number_of_points_for_d_sphere}
It is clear that $d+2$ points lie on a unique $S^d$; hence, $d+2$ points are needed to fit a unique $S^d$ to them. 
This is because $S^d$ can be seen as an embedded submanifold in $\mathbb{R}^{d+1}$; hence, it requires $(d+1+1)$ points to be defined uniquely. 
\end{lemma}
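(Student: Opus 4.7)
The plan is to prove the lemma by parameter counting and by showing that the resulting system for the center and radius has a unique solution when the points are in general position. A sphere $S^d \subset \mathbb{R}^{d+1}$ is characterized entirely by its center $\b{c} \in \mathbb{R}^{d+1}$ and its radius $r \geq 0$, so the space of $d$-spheres in $\mathbb{R}^{d+1}$ is parametrized by $(d+1) + 1 = d+2$ real parameters. Each condition ``the point $\b{x}_i$ lies on the sphere'' is a single scalar equation $\Vert \b{x}_i - \b{c} \Vert^2 = r^2$, so to pin down the $d+2$ parameters we expect exactly $d+2$ such conditions, i.e.\ $d+2$ points.

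To turn this count into an actual uniqueness statement, I would make the equations explicit. Expanding $\Vert \b{x}_i - \b{c} \Vert^2 = r^2$ gives
\begin{equation*}
\Vert \b{x}_i \Vert^2 - 2\b{x}_i^{\top} \b{c} + \Vert \b{c} \Vert^2 = r^2.
\end{equation*}
Subtracting the equation for $i=1$ from the equation for $i=2, \dots, d+2$ eliminates the nonlinear terms $\Vert \b{c} \Vert^2$ and $r^2$, leaving a linear system
\begin{equation*}
2(\b{x}_i - \b{x}_1)^{\top} \b{c} = \Vert \b{x}_i \Vert^2 - \Vert \b{x}_1 \Vert^2, \qquad i = 2, \dots, d+2,
\end{equation*}
of $d+1$ equations in the $d+1$ unknown components of $\b{c}$. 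The coefficient matrix has rows $\b{x}_i - \b{x}_1$, which is invertible precisely when these difference vectors are linearly independent, i.e.\ when the $d+2$ points are affinely independent (in general position in $\mathbb{R}^{d+1}$). Under this assumption, $\b{c}$ is uniquely determined, and then $r = \Vert \b{x}_1 - \b{c} \Vert$ is forced, yielding a unique $S^d$.

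For the converse direction (that fewer than $d+2$ points cannot suffice), I would note that with only $k < d+2$ conditions, the above parameter space still has a positive-dimensional family of solutions, so uniqueness fails. Combined with the existence/uniqueness from the linear system, this establishes that exactly $d+2$ points are needed.

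The main obstacle is really a matter of hypothesis rather than difficulty: the statement is only true for $d+2$ points \emph{in general position}. If all $d+2$ points happen to lie on a common affine hyperplane (equivalently, the vectors $\b{x}_i - \b{x}_1$ span only a proper subspace of $\mathbb{R}^{d+1}$), the linear system above is rank-deficient and the center is either non-unique or nonexistent. I would make this genericity assumption explicit in the proof so that the lemma is not vacuously or misleadingly stated.
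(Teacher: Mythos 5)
Your proof is correct, and it is substantially more rigorous than what the paper offers: the paper's entire justification is the one-sentence parameter count embedded in the lemma statement ($S^d$ sits in $\mathbb{R}^{d+1}$, so it ``requires'' $d+1+1$ points), with no argument that the count actually yields existence or uniqueness. Your route --- writing $\Vert \b{x}_i - \b{c}\Vert^2 = r^2$, subtracting the first equation to kill the quadratic terms, and reducing to an invertible $(d+1)\times(d+1)$ linear system for the center --- turns the heuristic into a genuine proof, and it dovetails with the determinant equation the paper itself uses later in Section 3 to compute $S_i$ (your linearization is exactly the cofactor expansion of that determinant along its first row). The most valuable thing you add is the explicit general-position hypothesis: the lemma as stated is false for $d+2$ points lying in a common affine hyperplane (e.g.\ three collinear points in $\mathbb{R}^2$ lie on no circle), and the paper never acknowledges this, even though its construction feeds the lemma nearest-neighbor points $\b{x}'_{i,j}$ that are deliberately pushed toward collinearity with $\b{x}_i$ as $\varepsilon \to 0^+$, which is precisely the degenerate regime. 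Your ``fewer than $d+2$ points do not suffice'' direction is stated a bit loosely (``positive-dimensional family''), but it is easily made precise: $k-1 < d+1$ linear equations leave an affine solution space of dimension at least $d+2-k > 0$ for the center, hence infinitely many spheres. In short: same underlying count as the paper, but you supply the missing linear-algebra argument and the missing nondegeneracy hypothesis.
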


\subsection{Osculating Hypersphere}


\hfill\break
\hfill\break
Osculating circle was proposed by Leibniz and Newton in the 17-th century. It is defined below. 
\begin{definition}[Osculating circle {\cite[Proposition V, Problem I]{newton1687principia}}]\label{definition_osculating_circle}
Consider a curve which is locally one-dimensional. At every point $\b{x}$ of this curve, we have a tangent circle which is tight to the curve in the sense that for two points $\b{x}_1$ and $\b{x}_2$ before and after $\b{x}$ on the curve, this circle passes through the three points $\b{x}$, $\b{x}_1$, and $\b{x}_2$ while $\b{x}_1$ and $\b{x}_2$ tend to $\b{x}$ on the curve. This tangent circle is called the osculating circle.
If the radius of the osculating circle at $\b{x}$ is $r$, the curvature of curve at $\b{x}$ is defined to be $1/r$. 
\end{definition}


An important characteristic of the osculating circles is as follows. 
\begin{lemma}[The Tait-Kneser theorem \cite{ghys2013osculating}]
Consider a smooth curve which is locally one-dimensional. If it has monotonic curvature, then the osculating circles of the curve are disjoint and nested within each other.
\end{lemma}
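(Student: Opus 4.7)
The plan is to parametrize the curve by arc length in the plane and reduce the theorem to a single application of the vector triangle inequality, applied to the \emph{center} of the osculating circle rather than to the curve itself. Let $\gamma : I \to \mathbb{R}^2$ be a unit-speed parametrization, with unit tangent $t$, unit normal $n$ (chosen so that $n$ points toward the center of the osculating circle), and signed curvature $\kappa$. On an interval where $\kappa$ is strictly monotone I may assume $\kappa(s) \neq 0$, so that $r(s) := 1/\kappa(s)$ and $c(s) := \gamma(s) + r(s)\, n(s)$ are well defined. The key computation is to differentiate $c(s)$ using $\gamma'(s) = t(s)$ and the planar Frenet--Serret identity $n'(s) = -\kappa(s) t(s)$; the tangent terms cancel and I obtain
\begin{equation*}
c'(s) = t(s) + r'(s)\, n(s) + r(s)\bigl(-\kappa(s)\, t(s)\bigr) = r'(s)\, n(s).
\end{equation*}
So the center of the osculating circle moves purely along the normal direction, at speed $|r'(s)|$.

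Next, for any $s_1 < s_2$ in $I$, the vector triangle inequality for the integral gives
\begin{equation*}
\| c(s_2) - c(s_1) \| = \Big\| \int_{s_1}^{s_2} r'(s)\, n(s)\, ds \Big\| \leq \int_{s_1}^{s_2} |r'(s)|\, ds = |r(s_2) - r(s_1)|,
\end{equation*}
where the last equality uses that $r'$ has constant sign on $I$, which in turn follows from the monotonicity of $\kappa$. Without loss of generality suppose $r(s_1) > r(s_2)$; the inequality then reads $\| c(s_2) - c(s_1) \| + r(s_2) \leq r(s_1)$, which is exactly the statement that the osculating circle at $\gamma(s_2)$ lies inside the closed disk bounded by the osculating circle at $\gamma(s_1)$. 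This already delivers the nesting part of the theorem.

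The main obstacle is upgrading nesting to genuine disjointness, which requires the triangle inequality above to be \emph{strict}. The argument I would give is that the unit normal $n(s)$ cannot be constant on any subinterval $[s_1, s_2] \subset I$: if it were, then $n'(s) \equiv 0$, and hence $\kappa(s) \equiv 0$ there, contradicting strict monotonicity of $\kappa$. Combined with $r'(s) \neq 0$ throughout, this means $r'(s) n(s)$ is not a nonnegative scalar multiple of a single fixed unit vector, so the vector triangle inequality is strict. One therefore obtains $\| c(s_2) - c(s_1) \| < r(s_1) - r(s_2)$, whence the smaller osculating circle lies in the \emph{open} disk bounded by the larger one and the two circles are disjoint, completing the proof.
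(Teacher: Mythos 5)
The paper does not actually prove this lemma: it is stated as a classical result and attributed to the reference \cite{ghys2013osculating}, with no argument supplied. Your proof is correct, and it is in fact essentially the proof given in that cited source (Ghys--Tabachnikov--Timorin): the computation $c'(s) = r'(s)\,n(s)$ showing the evolute is traced in the normal direction, followed by the integral triangle inequality $\|c(s_2)-c(s_1)\| \leq |r(s_2)-r(s_1)|$ and the observation that equality would force $n$ to be constant, which nonvanishing curvature forbids. Your handling of strictness is the right one, and your reduction of disjointness to the strict inequality $\|c(s_2)-c(s_1)\| + r(s_2) < r(s_1)$ is exactly how one concludes that the smaller circle sits in the open disk bounded by the larger. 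Two small caveats worth making explicit, since the lemma as stated in the paper is looser than what you prove: the conclusion requires the curvature to be \emph{strictly} monotone (if $\kappa$ is constant on a subarc the osculating circles there coincide rather than being disjoint), and the osculating circle is undefined where $\kappa = 0$, so one must restrict to an arc free of inflection points --- both assumptions you invoke but which the paper's statement omits. Given that the paper supplies no proof at all, your self-contained argument is a strict improvement on what is there.
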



The osculating circle is closely related to the involute and evolute of curve \cite{mccleary2013geometry}, defined below. \begin{definition}[Involute and evolute of curve \cite{huygens1673horologium}]
The involute of a curve is the locus of points at the tight string to the curve which is unwrapped from the curve. 
The locus of centers of the osculating circles, while moving along the curve, is the evolute of curve. The evolute of an involute of a curve is the curve itself. 
\end{definition}

In this paper, we are not restricted to 1-manifolds; hence, we generalize the definition of the osculating circles to osculating hyperspheres, defined below.  
\begin{definition}[Osculating hypersphere]\label{definition_osculating_hypersphere}
Consider a $d$-hypersurface which is $d$-dimensional locally and is embedded in $\mathbb{R}^{d+1}$. At every point $\b{x}$ of this hypersurface, we have a tangent $d$-sphere which is tight to the manifold in the sense that for $d+1$ points $\{\b{x}_1, \dots, \b{x}_{d+1}\}$ around $\b{x}$ on the manifold, this $d$-sphere passes through the points $\b{x}$ and $\{\b{x}_1, \dots, \b{x}_{d+1}\}$ while all the points $\{\b{x}_1, \dots, \b{x}_{d+1}\}$ tend to $\b{x}$ on the manifold. 
This tangent $d$-sphere is called the osculating hypersphere.
If the radius of the osculating hypersphere at $\b{x}$ is $r$, the curvature of hypersurface at $\b{x}$ is defined to be $1/r$. 
\end{definition}

\subsection{Manifold Hypothesis}


\begin{definition}[Dataset]
Consider a set of data points lying in the Euclidean space $\mathbb{R}^d$. This set is denoted by $\mathcal{D} := \{\b{x}_i\}_{i=1}^n$ where $\b{x}_i = [x_i^1, \dots, x_i^d]^\top$ is the $i$-th data point and $x_i^j$ denotes the $j$-th dimension (or feature) of $\b{x}_i$. This set is called dataset in machine learning and data science. 
\end{definition}

\begin{assumption}\label{assumption_dataset_in_ball}
We assume that the dataset is bounded, meaning that it lies in a ball $\mathcal{B}$ in $\mathbb{R}^d$: 
\begin{align}
& \mathcal{B}^d_r := \{\b{x} \in \mathbb{R}^{d} \,|\, \Vert \b{x} \Vert \leq r\}, \\
& \mathcal{D} \subset \mathcal{B}^d_r, 
\end{align}
where $r \in (0, \infty)$ is the radius of ball which may be very large but not infinite.  
\end{assumption}

\begin{lemma}[Whitney embedding theorem \cite{whitney1936differentiable,whitney1944self}]
Every $d$-dimensional differentiable manifold can be embedded in $\mathbb{R}^{2d+1}$ \cite{whitney1936differentiable}. In some cases, it can be embedded in $\mathbb{R}^{2d}$ \cite{whitney1944self}.
\end{lemma}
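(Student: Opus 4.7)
The plan is to establish the $\mathbb{R}^{2d+1}$ embedding in full and then indicate how the $\mathbb{R}^{2d}$ refinement requires substantially more machinery. I would begin with the case where the smooth manifold $\mathcal{M}$ is compact. Using compactness and second countability, choose a finite cover by charts $\{(U_\alpha, \phi_\alpha)\}_{\alpha=1}^k$ together with a subordinate smooth partition of unity $\{\rho_\alpha\}$. Defining
\begin{align*}
F(x) := \bigl(\rho_1(x)\phi_1(x),\, \rho_1(x),\, \dots,\, \rho_k(x)\phi_k(x),\, \rho_k(x)\bigr) \in \mathbb{R}^{k(d+1)},
\end{align*}
a routine check (injectivity comes from the $\rho_\alpha$-coordinates singling out a chart, and the immersion property from the $\rho_\alpha\phi_\alpha$-coordinates on that chart) shows that $F$ is a smooth embedding into $\mathbb{R}^N$ with $N := k(d+1)$.

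The second step reduces the ambient dimension by iterated generic projection. Given a smooth embedding $F:\mathcal{M}\hookrightarrow \mathbb{R}^N$ with $N > 2d+1$, for each unit vector $v \in S^{N-1}$ consider the orthogonal projection $\pi_v:\mathbb{R}^N \to v^{\perp} \cong \mathbb{R}^{N-1}$. Failure of injectivity of $\pi_v \circ F$ is detected by the smooth secant map $s:(\mathcal{M}\times\mathcal{M})\setminus\Delta \to \mathbb{R}P^{N-1}$ sending $(x,y)$ to the line through $F(x)$ and $F(y)$, while failure of the immersion property is detected by the direction map $\tau:T\mathcal{M}\setminus 0 \to \mathbb{R}P^{N-1}$ sending a nonzero tangent vector to its image direction under $dF$. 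Both source spaces have dimension $2d$, so when $N-1 > 2d$ the image of each has measure zero in $\mathbb{R}P^{N-1}$ by the standard consequence of Sard's theorem, furnishing a direction $v$ outside both images. The composition $\pi_v \circ F$ is then an injective immersion from a compact manifold and hence a smooth embedding. Iterating this step drops the ambient dimension to exactly $2d+1$.

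The non-compact case I would handle by choosing a proper smooth exhaustion function $f:\mathcal{M}\to [0,\infty)$ (produced by partition of unity), embedding each piece $f^{-1}([n,n+2])$ by the compact argument, and assembling a global embedding into $\mathbb{R}^{2d+1}$ by interleaving $f$ into an extra coordinate so that images of distinct integer-level pieces lie in disjoint slabs. The main obstacle, and the conceptually hardest part, is the improvement from $\mathbb{R}^{2d+1}$ to $\mathbb{R}^{2d}$. After placing $F:\mathcal{M}^d \to \mathbb{R}^{2d}$ in general position, self-intersections reduce to isolated transverse double points carrying intersection signs; one then tries to pair double points of opposite sign and remove each such pair by an ambient isotopy supported in an embedded Whitney disk whose boundary joins the two double points along arcs in the two sheets of $F(\mathcal{M})$. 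Constructing and embedding this Whitney disk is the heart of the 1944 argument and succeeds cleanly for $d \geq 3$, where general position supplies the needed simple connectivity of the complement. I would sketch this step in outline rather than reproduce the delicate transversality and disk-embedding estimates, since the proof there is famously subtle in low dimensions and is what distinguishes the strong theorem from the straightforward dimension-count above.
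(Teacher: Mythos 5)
The paper does not prove this lemma at all: it is quoted purely as background, with citations to Whitney's 1936 and 1944 papers standing in for the argument. So there is no in-paper proof to compare against, and the relevant question is only whether your sketch is sound. It is: the partition-of-unity embedding into $\mathbb{R}^{k(d+1)}$, followed by generic projection along directions avoiding the images of the secant map and the tangent-direction map (both of measure zero in $\mathbb{R}P^{N-1}$ once $N-1>2d$, by Sard/the easy measure-zero lemma), is the standard textbook route to the $\mathbb{R}^{2d+1}$ statement, and you correctly flag that an injective immersion of a compact manifold is automatically an embedding while the non-compact case needs the proper exhaustion trick. Two small remarks: the tangent-direction map really factors through the projectivized tangent bundle, which has dimension $2d-1$, so that constraint is slightly weaker than the secant one (it is the secant map that forces $N-1>2d$); and the paper's phrasing ``in some cases'' undersells Whitney's 1944 result, which gives an embedding in $\mathbb{R}^{2d}$ for every smooth $d$-manifold with $d>0$ --- your caveat that the Whitney-trick argument runs cleanly only for $d\geq 3$, with separate low-dimensional arguments needed for $d=1,2$, is the honest way to state what the dimension count alone cannot deliver. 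As a sketch at the level of detail the setting calls for, your proposal is correct and substantially more informative than the paper's bare citation.
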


\begin{definition}[Manifold hypothesis \cite{fefferman2016testing}]
According to the manifold hypothesis, data points of a dataset lie on a submanifold with lower dimensionality. In other words, the dataset in $\mathbb{R}^d$ lies on an embedded submanifold with local dimensionality less than $d$. 
\end{definition}

\section{Hypersurface Embedding}\label{section_hypersurface_embedding}

Consider a dataset in the Euclidean space $\mathbb{R}^d$. 
Here, we explain our proposed method for hypersurface embedding on which the dataset lies. By this hypersurface, we show that manifold hypothesis holds at least with dimensionality $d-1$ for a dataset in $\mathbb{R}^d$. 

\subsection{Main Idea}

\hfill\break
\hfill\break
The main idea of hypersurface embedding is as follows. 
We use the idea of osculating hyperspheres. However, a problem arises here. We do not have a hypersurface yet to define the osculating hyperspheres on (see Definition \ref{definition_osculating_hypersphere}). In other words, a chicken and egg problem exists here. For our goal in finding a hypersurface, we need osculating hyperspheres while osculating hyperspheres require a hypersurface to be defined on.
For resolving this issue, we go the other way around by defining a hypersurface based on several existing osculating hyperspheres. In other words, we find some osculating hyperspheres for the dataset and then fit a hypersurface to those hyperspheres in a way that the hyperspheres would be osculating to the hypersurface. This is shown, by an example in $\mathbb{R}^2$, in Fig. \ref{figure_osculating_circles}.

\begin{figure}[!h]
\centering
\includegraphics[width=5in]{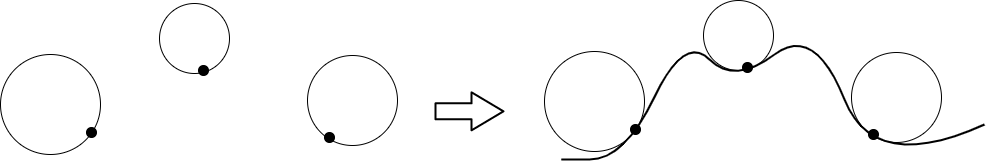}
\caption{Fitting a hypothetical hypersurface to several osculating hyperspheres in $\mathbb{R}^2$.}
\label{figure_osculating_circles}
\end{figure}

We assume all points of the dataset lie on a hypothetical hypersurface. 
For every point in the dataset, we fit an osculating hypersphere using its neighbors. Therefore, we use a $k$-Nearest Neighbors ($k$NN) graph for the dataset, where $k=d$ in our method. 
For every point $\b{x}$, we fit an osculating hypersphere to it and its $k$ neighbors while each of its neighbors tend to the point $\b{x}$ on a straight line connecting the neighboring point and $\b{x}$. When all the $k$ neighbors tend to the point $\b{x}$, the hypersphere converges to the osculating hypersphere on a hypothetical hypersurface at the point $\b{x}$. 

We fit an osculating hypersphere to every point in the dataset to have $n$ number of osculating hyperspheres. Then, we fit a hypersurface to the existing osculating hyperspheres to obtain the hypothetical hypersurface. The obtained hypersurface is the desired hypersurface which verifies the manifold hypothesis in at least $\mathbb{R}^{d-1}$. 
For fitting the hypersurface to osculating hyperspheres, we do surgery on every hypersphere using surgery theory \cite{browder2012surgery}. For every osculating hypersphere, we keep only a hyper-cap containing every point and ignore the rest of hypersphere. Then, we stitch the hyper-caps to each other using hyper-cylinders between the caps. The resulted manifold, whose parts are the hyper-caps and the hyper-cylinders, is the desired smooth hypersurface which is locally $(d-1)$-dimensional. 


\subsection{Fitting Osculating Hyperspheres to Points}

\hfill\break
\hfill\break
The dataset $\mathcal{D} :=\{\b{x}_i\}_{i=1}^n$ is in $\mathbb{R}^d$ so, according to Definition \ref{definition_embedded_hypersurface}, the desired hypersphere is locally $(d-1)$-dimensional. According to Definition \ref{definition_d_sphere}, the osculating hyperspheres for this desired hypersurface is $S^{d-1}$ which is locally $(d-1)$-dimensional, embedded in $\mathbb{R}^d$. According to Lemma \ref{lemma_number_of_points_for_d_sphere}, we require $d+1$ points for uniquely fitting such an osculating hypersphere. 

For every point $\b{x}$, we want to fit an osculating hypersphere using the point itself and its $k$NN. As we need $d+1$ points for fitting a unique hypersphere, $k$ should be equal to $d$:
\begin{align}
k = d.
\end{align}
This also means that the following assumption should hold. 
\begin{assumption}
A required assumption for our method is to have:
\begin{align}
n > d,
\end{align}
i.e., the sample size of dataset is larger than the dimensionality of dataset. 
\end{assumption}
Let the $d$ neighbors of the $i$-th data point $\b{x}_i \in \mathcal{D}$ be denoted by $\{\b{x}_{i,j}\}_{j=1}^d$ where $\b{x}_{i,j} \in \mathcal{D}, \forall i,j$. According to Lemma \ref{lemma_number_of_points_for_d_sphere}, every point $\b{x}_i$ with its neighbors $\{\b{x}_{i,j}\}_{j=1}^d$ can define a unique $(d-1)$-sphere, embedded in $\mathbb{R}^d$, passing through them. Let this $(d-1)$-sphere be denoted by $S_i$ corresponding to $\b{x}_i$. Through the point $\b{x}_i$ and its $j$-th neighbor $\b{x}_{i,j}$, we fit a line, denoted by $\ell_{i,j}$, embedded in $\mathbb{R}^d$. 
We move every neighbor $\b{x}_{i,j}$ to the point $\b{x}_i$ along the line $\ell_{i,j}$ until we get very close to $\b{x}_i$: 
\begin{align}\label{equation_x_neighbor_limit}
& \b{x}'_{i,j} := \lim_{\varepsilon \rightarrow 0^+} \Big( (1-\varepsilon)\, \b{x}_{i} + \varepsilon\, \b{x}_{i,j} \Big). 
\end{align}
According to Definitions \ref{definition_osculating_circle} and \ref{definition_osculating_hypersphere}, by tending all $d$ neighbors $\{\b{x}_{i,j}\}_{j=1}^d$ to $\b{x}_i$, the hypersphere $S_i$ converges to an osculating hypersphere for a hypothetical hypersurface on which the point $\b{x}_i$ exists. 
The osculating hypersphere $S_i$ is obtained by fitting a $(d-1)$-sphere to $\b{x}_i$ and $\{\b{x}'_{i,j}\}_{j=1}^d$.
If $x^r$ denotes the $r$-th dimension of $\b{x}$, the expression of $S_i$ is obtained by fitting a hypersphere, embedded in $\mathbb{R}^d$, to these points:
\begin{align}
\text{det}\left(
\begin{bmatrix}
\sum_{r=1}^d (x^r)^2 & x^1 & x^2 & \dots & x^d & 1\\
\sum_{r=1}^d (x_i^r)^2 & x_i^1 & x_i^2 & \dots & x_i^d & 1\\
\sum_{r=1}^d (x_{i,1}^r)^2 & x_{i,1}^1 & x_{i,1}^2 & \dots & x_{i,1}^d & 1\\
\vdots & \vdots & \vdots & & \vdots & \vdots \\
\sum_{r=1}^d (x_{i,d}^r)^2 & x_{i,d}^1 & x_{i,d}^2 & \dots & x_{i,d}^d & 1
\end{bmatrix}
\right) \overset{\text{set}}{=} 0,
\end{align}
where $\text{det}(.)$ denotes the determinant of matrix. 
This osculating hypersphere $S_i$ passes through $\b{x}_i$ and $\{\b{x}'_{i,j}\}_{j=1}^k$ (see Fig. \ref{figure_hypercap}). 
We do this procedure for all the $n$ points of dataset. Hence, we have have fitted $n$ number of osculating hyperspheres, $\{S_i\}_{i=1}^n$, on a hypothetical hypersurface. Every $S_i$ is a $(d-1)$-sphere embedded in $\mathbb{R}^d$. 

\begin{figure}[!h]
\centering
\includegraphics[width=5.5in]{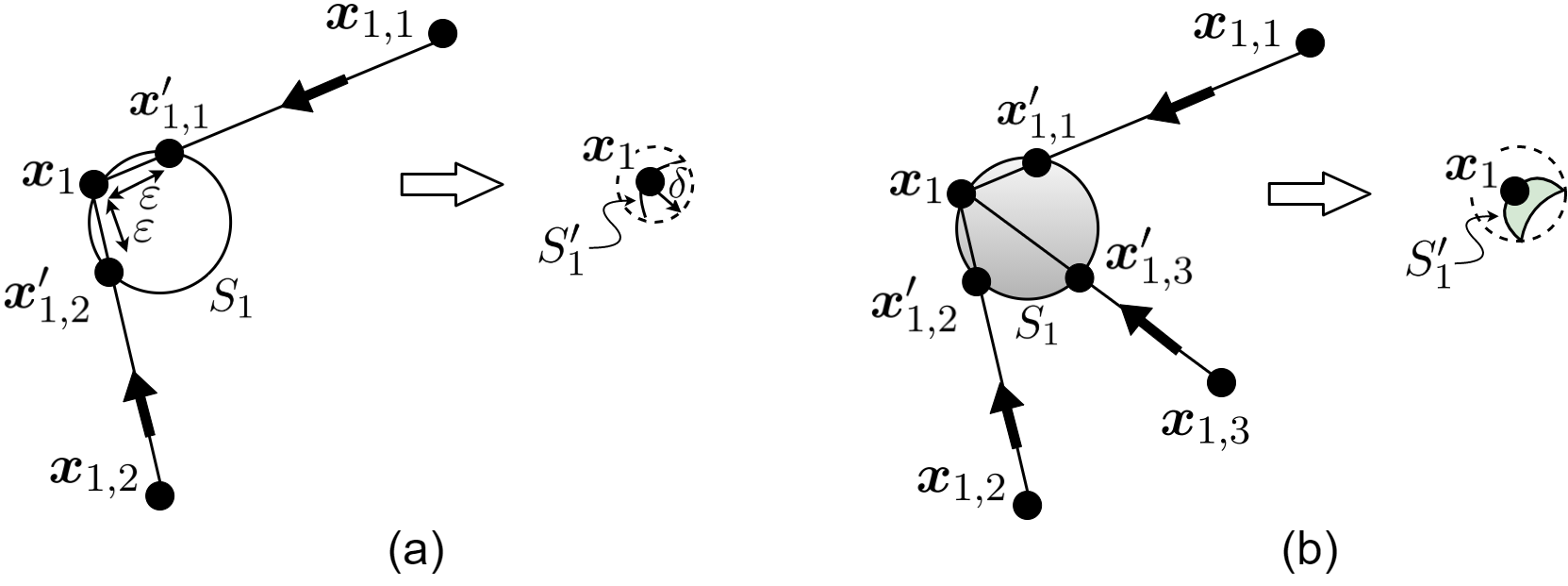}
\caption{fitting an osculating hypersphere on $\b{x}_1$ using its neighbors and then applying surgery to extract a hyper-cap in (a) $\mathbb{R}^2$ and (b) $\mathbb{R}^3$.}
\label{figure_hypercap}
\end{figure}

\subsection{Surgery and Hyper-caps}

\hfill\break
\hfill\break
Now, we borrow the idea of surgery theory \cite{browder2012surgery} to have hyper-caps on the $n$ data points. 
Consider an osculating hypersphere $S_i$. We know $\b{x}_i \in S_i$. We do surgery on $S_i$ to divide it to two manifolds $S'_i$ and $S''_i$ where:
\begin{align}
& S'_i \sqcup S''_i = S_i, \\
& S'_i \cap S'_i = \varnothing, \\
& \b{x}_i \in S'_i, \quad \b{x}_i \not\in S''_i, 
\end{align}
where $\sqcup$ denotes the disjoint union. We do this surgery in a way that $S'_i$ is much smaller than $S''_i$. In other words, $S'_i$ only contains $\b{x}_i$ and merely a small neighborhood of $\b{x}_i$ on $S_i$:
\begin{align}
& S'_i := \{\b{x}\, |\, \Vert\b{x} - \b{x}_i\Vert \leq \delta, \b{x} \in S_i\}, 
\end{align}
where $\delta$ is a very small positive number and $\delta \leq \varepsilon$ (cf. Eq. (\ref{equation_x_neighbor_limit})). This $S'_i$ is a hyper-cap containing $\b{x}_i$.
We do this for all $n$ osculating hyperspheres to have $n$ number of hyper-caps which are locally $(d-1)$-dimensional and are embedded in $\mathbb{R}^d$. 
The procedure of surgery and extraction of the hyper-cap is illustrated in Fig. \ref{figure_hypercap}.




\subsection{Hyper-cylinder Fitting}\label{section_hyper_cylinder_fitting}


\hfill\break
\hfill\break
Every two hyper-caps can be connected by a diagonal hyper-cylinder which is locally $(d-1)$-dimensional and embedded in $\mathbb{R}^d$. 
We do not use the entire hyper-cylinder but we do surgery on it \cite{browder2012surgery} to halve it from its main diagonal. This is shown in Fig. \ref{figure_hyper_cylinder} for $\mathbb{R}^3$. 
Let the hyper-cylinder connecting the hyper-caps $S'_i$ and $S'_j$ be denoted by $c_{i,j}$. 
Not all pairs of $S'_i$ and $S'_j$ should be connected by a partial hyper-cylinder. 
We choose the connected hyper-caps in a way that, except two hyper-caps (called the boundary hyper-caps), every hyper-cap is connected to exactly two other hyper-caps by a partial hyper-cylinder. 
For the two boundary hyper-caps, we can have two approaches:
\begin{enumerate}
\item In the first approach, each of the two boundary hyper-caps is connected to a nearby hyper-cap and the other boundary hyper-cap. 
We connect the two boundary hyper-caps using an orientable $(d-1)$-hyper-strip, with boundary, in a way that it does not intersect the connecting partial hyper-cylinders and the hyper-caps. As the dataset is bounded and falls in some ball in the Euclidean space (see Assumption \ref{assumption_dataset_in_ball}), there always exists such non-intersecting hypersurface. 
In this approach, every hyper-cap is connected to exactly two other hyper-caps. 
\item In the second approach, we assume that two extra data points exist at infinity but in opposite directions. For example, the two extra points can be $[0,0, \dots, 0, \infty]^\top \in \mathbb{R}^d$ and $[0,0, \dots, 0, -\infty]^\top \in \mathbb{R}^d$. 
Each of the two boundary hyper-caps is connected to a nearby hyper-cap and one of the extra points at infinity. We connect the boundary hyper-cap to the point at infinity with an orientable $(d-1)$-hyper-strip, with boundary. The two added hyper-strips for the two boundary hyper-caps are chosen in a way that they do not intersect, i.e., they are disjoint.
In this approach, every intermediate hyper-cap is connected to exactly two other hyper-caps but the boundary hyper-caps are each connected to only one other intermediate hyper-cap.
\end{enumerate}
An example for both approaches in $\mathbb{R}^2$ is shown in Fig. \ref{figure_connecting_hypercaps}.

\begin{figure}[!h]
\centering
\includegraphics[width=3.5in]{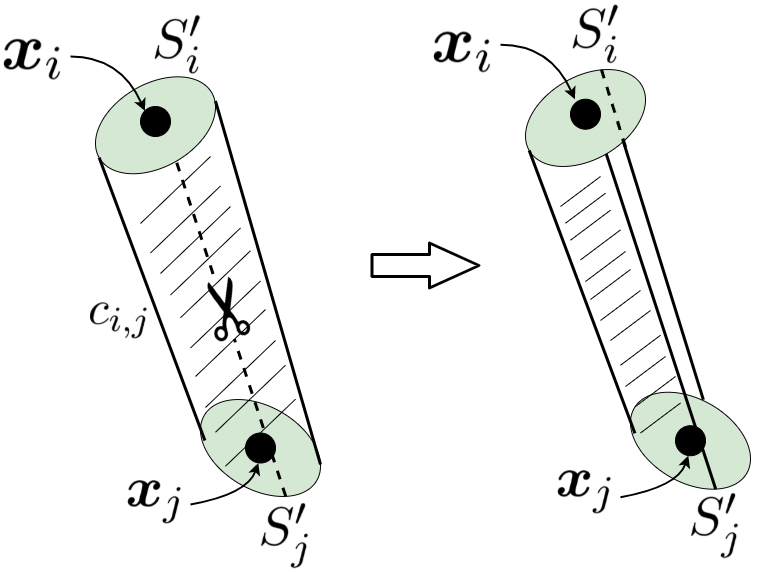}
\caption{Connecting hyper-caps $S'_i$ and $S'_j$ with a tilted/diagonal hyper-cylinder $c_{i,j}$ and then halving it diagonally with surgery, in $\mathbb{R}^3$. Note that the hyper-caps in this figure should be imagined to be curvy like a bowl.}
\label{figure_hyper_cylinder}
\end{figure}

\begin{figure}[!h]
\centering
\includegraphics[width=5.5in]{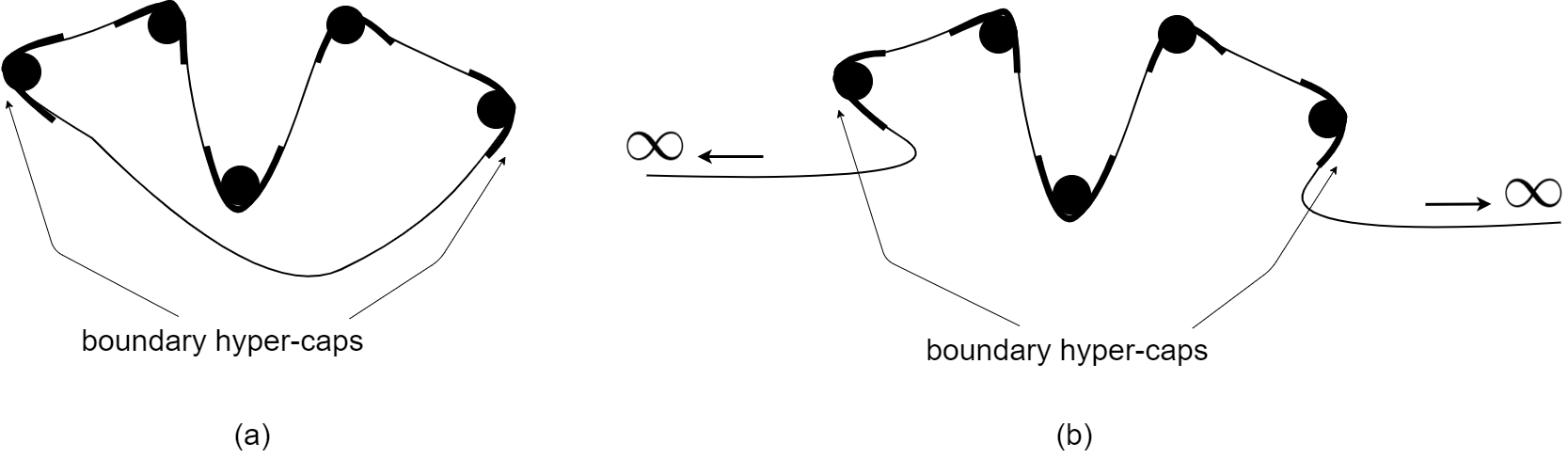}
\caption{Connecting the hyper-caps, in $\mathbb{R}^2$, in (a) the first approach and (b) the second approach.}
\label{figure_connecting_hypercaps}
\end{figure}

\subsection{The Embedded Hypersurface}

\hfill\break
\hfill\break
We denote the final desired hypersurface by $\mathcal{W}$.
Let $\mathcal{E}$ denote a set of pairs of indices for the hyper-caps which are connected by the connecting hyper-cylinders.
Let $i_a$ and $i_b$ be the indices of the boundary hyper-caps where $i_a, i_b \in \{1, \dots, n\},  i_a \neq i_b$. 
In the first approach, we denote the hyper-strip connecting the two boundary hyper-caps by $s_{i_a, i_b}$.
In this approach, the final hypersurface is obtained by the disjoint union of hyper-caps and the connecting hyper-cylinders:
\begin{align}\label{equation_hypersurface_approach_1}
& \mathcal{W} := \Big( \bigsqcup_{i=1}^n S'_i \Big) \sqcup \Big(\bigsqcup_{(i,j) \in \mathcal{E}} c_{i,j} \Big) \sqcup s_{i_a, i_b}.
\end{align}
In the second approach, we denote the two hyper-strips for connecting the two boundary hyper-caps to some infinity by $s_{i_a}$ and $s_{i_b}$.
In this approach, the hypersurface is obtained by the disjoint union of hyper-caps, the connecting hyper-cylinders, and the two hyper-strips.
\begin{align}\label{equation_hypersurface_approach_2}
& \mathcal{W} := \Big( \bigsqcup_{i=1}^n S'_i \Big) \sqcup \Big(\bigsqcup_{(i,j) \in \mathcal{E}} c_{i,j} \Big) \sqcup s_{i_a} \sqcup s_{i_b}. 
\end{align}
The obtained hypersurface $\mathcal{W}$ by either approach is $(d-1)$-dimensional locally, i.e., it is homeomorphic to $\mathbb{R}^{d-1}$. 

\section{A Numerical Example}\label{section_example}

Here, we provide a numerical example for the proposed method in $\mathbb{R}^2$. 
Consider a two-dimensional dataset with $n=3$ points $\b{x}_1 = [0,0]^\top$, $\b{x}_2 = [0,1]^\top$, and $\b{x}_3 = [1,0]^\top$. 
For the osculating hypersphere for $\b{x}_1$, we tend $\b{x}_{1,1} = \b{x}_2$ and $\b{x}_{1,2} = \b{x}_3$ to $\b{x}_1$. Hence, we have $\b{x}'_{1,1} = [0, \varepsilon]^\top$ and $\b{x}'_{1,2} = [\varepsilon, 0]^\top$ and the osculating hypersphere $S_1$ at $\b{x}_1$ is:
\begin{align}
&\text{det}\left(
\begin{bmatrix}
(x^1)^2+(x^2)^2 & x^1 & x^2 & 1\\
0 & 0 & 0 & 1\\
\varepsilon^2 & 0 & \varepsilon & 1\\
\varepsilon^2 & \varepsilon & 0 & 1\\
\end{bmatrix}
\right) 
\overset{(a)}{=} 
((x^1)^2+(x^2)^2) (-\varepsilon^2) - x^1(-\varepsilon^3) + x^2(\varepsilon^3) \nonumber\\
&~~~~~~~~= \varepsilon^2 \big(\!-(x^1)^2-(x^2)^2 + \varepsilon x^1 + \varepsilon x^2\big) \overset{\text{set}}{=} 0 
\,\,\implies\,\, (x^1)^2 + (x^2)^2 - \varepsilon x^1 - \varepsilon x^2 = 0, \label{equation_numerical_example_S_prime_1}
\end{align}
where $(a)$ is because of the Laplace expansion of determinant. 

Likewise, for $\b{x}_2$, we have $\b{x}_{2,1} = \b{x}_1$, $\b{x}_{2,2} = \b{x}_3$, $\b{x}'_{2,1} = [0, 1-\varepsilon]^\top$ and $\b{x}'_{2,2} = [\varepsilon, 1-\varepsilon]^\top$ and the osculating hypersphere $S_2$ is:
\begin{align}
&\text{det}\left(
\begin{bmatrix}
(x^1)^2+(x^2)^2 & x^1 & x^2 & 1\\
0 & 1 & 0 & 1\\
(1-\varepsilon)^2 & 0 & (1-\varepsilon) & 1\\
\varepsilon^2 + (1-\varepsilon)^2 & \varepsilon & (1-\varepsilon) & 1\\
\end{bmatrix}
\right) 
\nonumber \\ 
&= 
((x^1)^2+(x^2)^2) (-\varepsilon (1-\varepsilon)) - x^1(- \varepsilon^2 (1-\varepsilon)) + x^2(\varepsilon^2 + \varepsilon (1-\varepsilon)^2) - 1(\varepsilon^2 (1-\varepsilon)) \nonumber\\
&= \varepsilon (1-\varepsilon) \big(\! -(x^1)^2 - (x^2)^2 + \varepsilon x^1 + (1-\varepsilon) x^2 - \varepsilon \big) + \varepsilon^2 x^2 \overset{\text{set}}{=} 0 \nonumber\\
&~~~~~~~~~\implies\,\, -(x^1)^2 - (x^2)^2 + \varepsilon x^1 + (1-\varepsilon) x^2 - \varepsilon + \frac{\varepsilon}{(1-\varepsilon)} x^2 = 0 \nonumber\\
&~~~~~~~~~\implies\,\, (x^1)^2 + (x^2)^2 - \varepsilon x^1 - (1+\frac{\varepsilon^2}{1-\varepsilon}) x^2 - \varepsilon = 0. \label{equation_numerical_example_S_prime_2}
\end{align}
The osculating hypersphere $S_3$ can also be obtained similarly. 

We apply surgery on the $S_1$, $S_2$, and $S_3$ to obtain $S'_1$, $S'_2$, and $S'_3$, respectively. Here, we show how to connect the hyper-caps $S'_1$ and $S'_2$ as an example. Hence, we show their surgery for this connection only. 
For the connection of $S'_1$ and $S'_2$, the surgery of $S'_1$ with radius $\delta$ is done as:
\begin{align*}
&(x^1)^2 + (x^2)^2 - \varepsilon x^1 - \varepsilon x^2 \big|_{x^2 = \delta} \overset{(\ref{equation_numerical_example_S_prime_1})}{=} 0 \,\,\implies\,\, (x^1)^2 - \varepsilon x^1 + \delta^2 - \varepsilon \delta = 0 \\
&\implies\,\, x^1 = \frac{\varepsilon - \sqrt{\varepsilon^2 - 4 \delta^2 + 4 \varepsilon \delta}}{2}.
\end{align*}
For the connection of $S'_1$ and $S'_2$, the surgery of $S'_2$ with radius $\delta$ is done as:
\begin{align*}
&(x^1)^2 + (x^2)^2 - \varepsilon x^1 - (1+\frac{\varepsilon^2}{1-\varepsilon}) x^2 - \varepsilon \big|_{x^2 = 1-\delta} \overset{(\ref{equation_numerical_example_S_prime_2})}{=} 0 \\
&\,\,\implies\,\, (x^1)^2 + (1-\delta)^2 - \varepsilon x^1 - (1+\frac{\varepsilon^2}{1-\varepsilon}) (1-
\delta) - \varepsilon = 0 \\
&\,\,\implies\,\, (x^1)^2 - \varepsilon x^1 + (1-\delta) (-\frac{\varepsilon^2}{1-\varepsilon}-\delta) - \varepsilon = 0 \\
&\implies\,\, x^1 = \frac{\varepsilon - \sqrt{\varepsilon^2 - 4 (1-\delta) (-\frac{\varepsilon^2}{1-\varepsilon}-\delta) + 4 \varepsilon}}{2}.
\end{align*}
In the $\mathbb{R}^2$ space, the connecting hyper-cylinder, which is locally $(d-1)$-dimensional locally, is simplified to a connecting line. Hence, the connecting hyper-cylinder between $S'_1$ and $S'_2$ is the line connecting the following two obtained points:
\begin{align*}
&y_1 := 
\begin{bmatrix}
y_1^1 \\
y_1^2
\end{bmatrix}
=
\begin{bmatrix}
0.5\varepsilon - 0.5\sqrt{\varepsilon^2 - 4 \delta^2 + 4 \varepsilon \delta} \\
\delta
\end{bmatrix}
, \\
&y_2 := 
\begin{bmatrix}
y_2^1 \\
y_2^2
\end{bmatrix}
=
\begin{bmatrix}
0.5\varepsilon - 0.5\sqrt{\varepsilon^2 - 4 (1-\delta) (-\frac{\varepsilon^2}{1-\varepsilon}-\delta) + 4 \varepsilon} \\
1-\delta
\end{bmatrix}.
\end{align*}
Let this line be expressed as:
\begin{align*}
a_1 x^1 + a_2 = x^2,
\end{align*}
where $a_1$ and $a_2$ are the coefficients.
This line can be obtained by the following system of equations:
\begin{align*}
\begin{bmatrix}
y_1^1 & 1\\
y_2^1 & 1
\end{bmatrix}
\begin{bmatrix}
a_1\\
a_2
\end{bmatrix}
= 
\begin{bmatrix}
y_1^2\\
y_2^2
\end{bmatrix},
\end{align*}
which can be a least squares problem. However, as the coefficient matrix has a full rank, the coefficients are easily calculated as:
\begin{align*}
\begin{bmatrix}
a_1\\
a_2
\end{bmatrix}
= 
\begin{bmatrix}
0.5\varepsilon - 0.5\sqrt{\varepsilon^2 - 4 \delta^2 + 4 \varepsilon \delta} & 1\\
0.5\varepsilon - 0.5\sqrt{\varepsilon^2 - 4 (1-\delta) (-\frac{\varepsilon^2}{1-\varepsilon}-\delta) + 4 \varepsilon} & 1
\end{bmatrix}^{-1}
\begin{bmatrix}
\delta \\
1 - \delta
\end{bmatrix}.
\end{align*}
Hence, we have the connecting hyper-cylinder (or line here) between $S'_1$ and $S'_2$. Similarly, we can calculate the surgery and hyper-cylinder for other hyper-caps. Putting all hyper-caps and the hyper-cylinders (i.e., lines here) gives the embedded $1$-hypersurface submanifold. 

\section{Geometrical Characteristics of the Embedded Hypersurface}\label{section_geometrical_characteristics}

\begin{proposition}[On having boundary]
The obtained embedded hypersurface $\mathcal{W}$ is a manifold with boundary. 
\end{proposition}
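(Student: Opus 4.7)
The plan is to verify that $\mathcal{W}$ fits the definition of a $(d-1)$-dimensional manifold with boundary, i.e., every point of $\mathcal{W}$ has an open neighborhood homeomorphic to either $\R^{d-1}$ or to the closed half-space $\Hyp^{d-1} := \{\b{y} \in \R^{d-1} : y_{d-1} \geq 0\}$, and that the set $\partial \mathcal{W}$ of points of the second type is non-empty. The strategy is to examine each building block of $\mathcal{W}$ individually, then track which boundary pieces get identified by the gluings in (\ref{equation_hypersurface_approach_1}) or (\ref{equation_hypersurface_approach_2}) and which remain exposed.

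I would first observe that each building block is a $(d-1)$-manifold with boundary. A hyper-cap $S'_i = \{\b{x} \in S_i : \Vert \b{x} - \b{x}_i \Vert \leq \delta\}$ is a geodesic disk of radius $\delta$ on $S_i$, so $\partial S'_i \cong S^{d-2}$. A partial hyper-cylinder $c_{i,j}$, obtained by diagonally halving the full hyper-cylinder $S^{d-2} \times [0,1]$ via surgery, has boundary consisting of its two "end" $(d-2)$-spheres together with the exposed diagonal cut. Each hyper-strip $s_{i_a,i_b}$, or $s_{i_a}$ and $s_{i_b}$, is declared by construction to be an orientable $(d-1)$-hyper-strip with boundary. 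The gluings in (\ref{equation_hypersurface_approach_1}) and (\ref{equation_hypersurface_approach_2}) attach the two end $(d-2)$-spheres of each $c_{i,j}$ to small sub-disks of $\partial S'_i$ and $\partial S'_j$, and attach the short edges of the hyper-strips to the remaining portions of the boundary spheres of the two boundary hyper-caps (or to infinity in approach 2).

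The unidentified boundary pieces, namely the diagonal cuts of each $c_{i,j}$, the long edges of each hyper-strip, and the portions of $\partial S'_i$ not covered by any attaching region, together comprise $\partial \mathcal{W}$. To see that $\partial \mathcal{W} \neq \varnothing$, I would pick a point on the interior of a diagonal cut of some $c_{i,j}$; a small disk neighborhood of such a point inside $c_{i,j}$ is a half-disk, hence homeomorphic to $\Hyp^{d-1}$, and is disjoint from all gluing loci by construction. At every other point, either the point is interior to a single building block (where the chart is Euclidean) or lies on an identified sub-sphere where two blocks meet, in which case a collar-neighborhood argument assembles the two half-neighborhoods into a full $\R^{d-1}$-chart. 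Hence $\mathcal{W}$ is a $(d-1)$-manifold with non-empty boundary.

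The main obstacle is the bookkeeping at the gluing loci: one must check that identifying the end $(d-2)$-sphere of a cylinder with a sub-disk of $\partial S'_i$ genuinely yields locally Euclidean structure rather than corners or singularities. Since each such identification is along a single $(d-2)$-sphere equipped with compatible collars on both sides (one from the cap, one from the cylinder), this is a standard application of the collar-neighborhood theorem for manifolds with boundary. A secondary subtlety appears in approach 2, where one end of each terminal hyper-strip tends to infinity; there I would note that an infinite hyper-strip is locally modeled on $\R \times [0,1]$ near its long edge, so the contribution to $\partial \mathcal{W}$ remains well-defined even without a compact closure.
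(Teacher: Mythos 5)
Your proposal is correct and follows essentially the same route as the paper's own (much terser) proof sketch: decompose $\mathcal{W}$ into hyper-caps, partial hyper-cylinders, and hyper-strips, and observe that the diagonal cuts left by halving the hyper-cylinders (together with the other unglued edges) furnish non-empty boundary. You additionally supply the local-chart and collar-neighborhood verification at the gluing loci that the paper omits, which strengthens rather than changes the argument.
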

\begin{proof}[Proof (sketch)]
The obtained hypersurface is composed of union of the hyper-caps and the connecting partial hyper-cylinders. The partial hyper-cylinders are obtained from a surgery on the tilted hyper-cylinders by halving it diagonally (see Fig. \ref{figure_hyper_cylinder}). Therefore, the connecting partial hyper-cylinders have boundary. Hence, the hypersurface also has boundary.
\end{proof}

\begin{proposition}[On the topology of hypersurface]\label{proposition_hypersurface_homeomorphic_to_strip}
In the first approach of connecting hyper-caps (see Section \ref{section_hyper_cylinder_fitting}), the obtained hypersurface $\mathcal{W}$ is homeomorphic to an orientable hyper-strip loop.
In the second approach, $\mathcal{W}$ is homeomorphic to an unbounded orientable hyper-strip.
\end{proposition}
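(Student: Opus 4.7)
The plan is to prove each homeomorphism by identifying the topological type of each building block of $\mathcal{W}$ and then understanding the combinatorial pattern of the gluings. First, I would establish that each hyper-cap $S'_i$ is homeomorphic to a closed $(d-1)$-disk $D^{d-1}$, since it is obtained by intersecting the round $(d-1)$-sphere $S_i$ with a small Euclidean ball around $\b{x}_i$, and such a small spherical cap is a smooth $(d-1)$-disk with boundary a round $S^{d-2}$. Similarly, the partial hyper-cylinder $c_{i,j}$, obtained by diagonally halving a slanted $(d-1)$-cylinder $S^{d-2}\times I$ via surgery (see Figure \ref{figure_hyper_cylinder}), is homeomorphic to $D^{d-2}\times I$: cutting a cylinder along a longitudinal disk leaves a topological product. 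The closing hyper-strips $s_{i_a,i_b}$, $s_{i_a}$, $s_{i_b}$ are assumed orientable $(d-1)$-hyper-strips with boundary and so are likewise homeomorphic to $D^{d-2}\times I$ (bounded case) or to $D^{d-2}\times[0,\infty)$ (unbounded case).

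Next, I would show that two hyper-caps connected by a partial hyper-cylinder form, up to homeomorphism, another piece of type $D^{d-2}\times I$. The key observation is that along each end of the cylinder $c_{i,j}$, the attaching region is a small $(d-2)$-disk on the boundary sphere $\partial S'_i \cong S^{d-2}$; gluing two $(d-1)$-disks to the two ends of a $D^{d-2}\times I$ along such disks in their boundaries yields again a $(d-1)$-manifold homeomorphic to $D^{d-2}\times I$, because we are only extending the interval factor. By induction on the number of hyper-caps and cylinders in the chain, any finite linear concatenation of caps and cylinders is homeomorphic to $D^{d-2}\times I$, i.e.\ to a hyper-strip.

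With the pieces identified and the chain homeomorphism established, the two approaches differ only in how the ends of the chain are closed off. In the first approach, Section \ref{section_hyper_cylinder_fitting} specifies that every cap is connected to exactly two others, so the incidence graph of caps and partial cylinders is a single cycle; adding the closing hyper-strip $s_{i_a,i_b}$ between the two boundary caps turns this linear chain into a topological loop, which by the previous paragraph is homeomorphic to $D^{d-2}\times S^1$, the orientable hyper-strip loop. In the second approach, the incidence graph is a single path whose endpoints are attached via the unbounded hyper-strips $s_{i_a}$ and $s_{i_b}$ to points at infinity; the resulting space is homeomorphic to $D^{d-2}\times\mathbb{R}$, the unbounded orientable hyper-strip. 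Orientability in both cases follows from the fact that each piece is a product (hence orientable) and that the hyper-cylinders and closing hyper-strips can be chosen — as stipulated in Section \ref{section_hyper_cylinder_fitting} — so as to attach in an orientation-compatible manner, avoiding any Möbius-type twist.

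The main obstacle I anticipate is a careful verification that the gluings can indeed be performed within a hyper-strip model, i.e.\ that the attaching disks on the cap boundaries can be chosen consistently so that successive caps and cylinders line up to produce a global product structure rather than an accidental twist or self-intersection. The cleanest way to handle this is to work in the PL or smooth category: choose a small $(d-2)$-disk neighborhood of each attaching point on $\partial S'_i$, pick a trivialization of the normal direction along the chain, and then invoke the standard fact that gluing $(d-1)$-disks along $(d-2)$-disks in their boundaries along orientation-reversing homeomorphisms yields a $(d-1)$-disk. Once this bookkeeping is set up, the homeomorphism types in Eqs.~(\ref{equation_hypersurface_approach_1}) and (\ref{equation_hypersurface_approach_2}) follow by straightforward induction on $n$.
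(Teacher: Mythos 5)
Your proposal is correct and follows the same basic route as the paper's (very brief) sketch: observe that the incidence pattern of hyper-caps and connectors is a single cycle in the first approach and a single path closed off at infinity in the second, and read off the hyper-strip-loop versus unbounded hyper-strip topology accordingly. The difference is one of rigor rather than strategy --- your identification of each piece ($S'_i \cong D^{d-1}$, $c_{i,j} \cong D^{d-2} \times I$) and the inductive boundary-connected-sum argument showing that a chain of caps and half-cylinders is again a $D^{d-2} \times I$ supply precisely the details the paper leaves implicit, so your write-up is a strictly more complete version of the paper's argument.
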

\begin{proof}[Proof (sketch)]
In the first approach, every intermediate hyper-cap is connected to exactly two other hyper-caps. Every boundary hyper-cap is connected to a neighbor hyper-cap and the other boundary hyper-cap (see Fig. \ref{figure_connecting_hypercaps}). Hence, it forms a loop structure in topology which is homeomorphic to a hyper-strip loop. Proof for why it is orientable will be provided in the proof of Corollary \ref{corollary_W_orientable}.
In the second approach, the intermediate hyper-caps are connected by hyper-cylinders. The two boundary hyper-caps are connected to some infinity by hyper-strips. Therefore, the resulted hypersurface has boundary (because of the partial hyper-cylinders and the hyper-strips) and is homeomorphic to an orientable hyper-strip. It is unbounded because it goes to infinity. Orientability will be shown in the proof of Corollary \ref{corollary_W_orientable}.
\end{proof}

\begin{corollary}[On smoothness]
In both approaches of connecting hyper-caps (see Section \ref{section_hyper_cylinder_fitting}), the obtained hypersurface $\mathcal{W}$ is smooth and differentiable.  
\end{corollary}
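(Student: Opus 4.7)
The plan is to reduce smoothness of the whole hypersurface $\mathcal{W}$ to smoothness on each constituent piece, plus a gluing argument along the $(d-2)$-dimensional junction circles where adjacent pieces meet. First I would note that each building block is a smooth $(d-1)$-manifold with boundary in its own right: the hyper-cap $S'_i$ is a closed geodesic disk inside the osculating hypersphere $S_i$, and $S^{d-1}$ is smooth; the partial hyper-cylinder $c_{i,j}$ is a smooth submanifold of $\mathbb{R}^d$ obtained by a diagonal surgery on $S^{d-2} \times [0,1]$, which preserves smoothness away from the surgery cut; and the connecting hyper-strip ($s_{i_a, i_b}$ in the first approach, or $s_{i_a}, s_{i_b}$ in the second) is chosen orientable and smooth by construction in Section~\ref{section_hyper_cylinder_fitting}. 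So smoothness holds in the interior of every piece.

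Next I would address the junctions. Where a hyper-cap $S'_i$ meets a partial hyper-cylinder $c_{i,j}$, the two pieces share a common boundary component diffeomorphic to $S^{d-2}$. The key observation is that the radius $\delta$ of the cap and the geometry of $c_{i,j}$ are free parameters, so we may choose the axis and radius of each partial hyper-cylinder so that its tangent $(d-1)$-plane along the junction circle agrees with the tangent plane to $S_i$ along the same circle. Once tangent planes match across the junction, a standard smooth bump-function argument produces charts that are smoothly compatible on a neighborhood of the junction, giving a $C^\infty$ transition. The same matching argument applies at the cap-to-strip junctions in both approaches.

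Combining these two steps, every point of $\mathcal{W}$ lies in a chart that is smoothly compatible with its neighbors, so the collection extends to a maximal smooth atlas and $\mathcal{W}$ becomes a smooth $(d-1)$-manifold. Differentiability then follows immediately from smoothness.

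The main obstacle I expect is the tangential matching at the junctions, because the osculating hyperspheres $\{S_i\}$ are fixed by the data and are not directly under our control, whereas the hyper-cylinders and hyper-strips are flexible. I would handle this by arguing that for each ordered pair of caps $(S'_i, S'_j)$ in $\mathcal{E}$ there is enough freedom in selecting the connecting tube's axis, radius, and the small cap radius $\delta$ to align tangent spaces at both ends, and that this choice can be made consistently across the whole graph of connections; the hyper-strips in the boundary case are then filled in by the same mechanism, relying on Assumption~\ref{assumption_dataset_in_ball} to guarantee that non-intersecting smooth fillings exist.
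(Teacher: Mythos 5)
Your proof takes a genuinely different, and considerably more careful, route than the paper's. The paper's own argument is a one-line sketch: each constituent piece (hyper-cap, partial hyper-cylinder, hyper-strip) is smooth, and since $\mathcal{W}$ is assembled from these pieces via Eqs.~(\ref{equation_hypersurface_approach_1}) and (\ref{equation_hypersurface_approach_2}), $\mathcal{W}$ is smooth. You correctly recognize that this inference is incomplete: smoothness of individual pieces does not give smoothness of their union, because a crease can form along the $(d-2)$-dimensional seams where a cap meets a tube (two smooth arcs meeting at a corner are the one-dimensional counterexample). Your addition --- matching tangent $(d-1)$-planes along each junction sphere by exploiting the freedom in $\delta$ and in the connecting tube's geometry, then producing compatible charts with a bump-function transition --- is exactly the ingredient the paper's sketch omits, and it is what an actual proof of the corollary would require. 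What the paper's version buys is brevity at the cost of rigor; what yours buys is an honest treatment of the only place where smoothness can actually fail.

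That said, your own argument still has a soft spot that you partially acknowledge. Tangential matching at \emph{both} ends of a single connector generally cannot be achieved by a literal (diagonally halved) hyper-cylinder, whose tangent planes along its two boundary spheres are constrained by its axis; once you deform the connector to achieve matching, it is no longer the object the construction in Section~\ref{section_hyper_cylinder_fitting} describes, and the claim that such deformed tubes can be chosen consistently and without new intersections across the entire connection graph is asserted rather than proved. A cleaner way to close this would be to replace each connector by an arbitrary smooth tube $S^{d-2}\times[0,1]$ embedded so as to agree to infinite order with the adjacent spheres near its ends (a collar-and-cutoff construction), which exists because the caps are disjoint and the data lie in a bounded ball (Assumption~\ref{assumption_dataset_in_ball}). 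With that modification your argument is sound; without it, it remains a sketch, though a substantially more complete one than the paper's.
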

\begin{proof}[Proof (sketch)]
All the hyper-caps, the partial hyper-cylinders, and the hyper-strips, used in the two approaches, are smooth. According to Eqs. (\ref{equation_hypersurface_approach_1}) and (\ref{equation_hypersurface_approach_2}), the embedded hypersurface is composed of these smooth elements; hence, it is smooth.
\end{proof}

\begin{corollary}[On boundedness]
Assuming that the points of dataset lie in some finite ball in $\mathbb{R}^d$ (see Assumption \ref{assumption_dataset_in_ball}), the obtained hypersurface $\mathcal{W}$ is bounded in the first approach of connecting hyper-caps (see Section \ref{section_hyper_cylinder_fitting}).
In the second approach, the obtained hypersurface $\mathcal{W}$ is unbounded, regardless of whether dataset is bounded or not. 
\end{corollary}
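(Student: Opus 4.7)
The plan is to prove the two cases separately, using the explicit description of $\mathcal{W}$ in Equations (\ref{equation_hypersurface_approach_1}) and (\ref{equation_hypersurface_approach_2}) and the fact that a finite union of bounded subsets of $\mathbb{R}^d$ is bounded. Recall that a subset of $\mathbb{R}^d$ is bounded if and only if it lies inside some ball of finite radius, so the proof amounts to producing such a containing ball in the first case and exhibiting a sequence of points in $\mathcal{W}$ with unbounded norm in the second.

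For the first approach, I would first bound each hyper-cap $S'_i$. By construction, $S'_i \subseteq \{\b{x}\,|\,\Vert\b{x}-\b{x}_i\Vert \leq \delta\}$, so by Assumption \ref{assumption_dataset_in_ball} and the triangle inequality, $S'_i \subset \mathcal{B}^d_{r+\delta}$ for every $i$. Next I would bound each connecting partial hyper-cylinder $c_{i,j}$: it joins two hyper-caps $S'_i$ and $S'_j$, both contained in $\mathcal{B}^d_{r+\delta}$, and by the convex construction of a tilted cylinder of fixed small cross-sectional radius, $c_{i,j}$ is contained in a bounded tubular neighbourhood of the segment $[\b{x}_i,\b{x}_j]$, which itself lies in the (convex) ball $\mathcal{B}^d_{r+\delta'}$ for some $\delta' > \delta$. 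Finally, for the single closing hyper-strip $s_{i_a,i_b}$ between the two boundary hyper-caps, the construction in Section \ref{section_hyper_cylinder_fitting} only requires that it avoid the other components; since both endpoints lie in $\mathcal{B}^d_{r+\delta}$ and the other components all lie in $\mathcal{B}^d_{r+\delta'}$, one may choose the hyper-strip inside an enlarged ball $\mathcal{B}^d_{R}$ for some finite $R > r+\delta'$ (its existence follows from the argument already given in Section \ref{section_hyper_cylinder_fitting}, where boundedness of the dataset is used to guarantee a non-intersecting connector). Taking the maximum radius over these finitely many components yields a single finite ball containing $\mathcal{W}$.

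For the second approach, I would simply note that the two hyper-strips $s_{i_a}$ and $s_{i_b}$ in Equation (\ref{equation_hypersurface_approach_2}) are, by construction, connectors from a boundary hyper-cap to one of the symbolic points $[0,\dots,0,\pm\infty]^\top$. Concretely this means that each of $s_{i_a}$ and $s_{i_b}$ contains a sequence of points $\{\b{z}_m\}_{m\in\mathbb{N}}$ with $\Vert \b{z}_m\Vert \to \infty$. Since $s_{i_a}, s_{i_b} \subseteq \mathcal{W}$, the same sequence lies in $\mathcal{W}$, so no ball of finite radius can contain $\mathcal{W}$; hence $\mathcal{W}$ is unbounded. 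This conclusion is independent of whether the dataset itself is bounded, since the unboundedness is forced by the chosen infinite extensions rather than by the positions of the data points.

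The only step with any subtlety is the argument that the closing hyper-strip $s_{i_a,i_b}$ in the first approach can be taken inside a finite ball while still avoiding all the previously constructed components; this is essentially an appeal to the construction already sketched in Section \ref{section_hyper_cylinder_fitting}, so the main work is to state it cleanly rather than to prove anything new. All other steps are routine bookkeeping with the triangle inequality and finite unions.
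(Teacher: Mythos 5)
Your proposal is correct and follows essentially the same approach as the paper's proof sketch: in the first approach all components lie near the bounded dataset and hence in a finite ball, and in the second approach the two hyper-strips extending to infinity force unboundedness. Your version is considerably more detailed than the paper's two-sentence sketch (which simply asserts boundedness from the dataset lying in a finite ball), and your explicit component-by-component bounding via the triangle inequality and finite unions is a welcome tightening of the same argument.
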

\begin{proof}[Proof (sketch)]
In the first approach, every hyper-cap is connected to two other hyper-caps (see Fig. \ref{figure_connecting_hypercaps}). Assuming that the points of dataset lie in some finite ball in $\mathbb{R}^d$, the topology of the resulted hypersurface is bounded. In the second approach, the hypersurface goes to some infinity by the two hyper-strips. Hence, it is unbounded. 
\end{proof}

\begin{corollary}[On orientability]\label{corollary_W_orientable}
In both approaches of connecting hyper-caps (see Section \ref{section_hyper_cylinder_fitting}), the obtained hypersurface $\mathcal{W}$ is orientable.  
\end{corollary}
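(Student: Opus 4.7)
The plan is to exhibit a continuous unit normal vector field on $\mathcal{W}$; since $\mathcal{W}$ is a codimension-one embedded submanifold of $\mathbb{R}^d$, this is equivalent to orientability. First I would verify that each constituent piece is individually orientable: each hyper-cap $S'_i$ is an open subset of the sphere $S_i$ and inherits the outward radial unit normal of $S_i$; each partial hyper-cylinder $c_{i,j}$ is homeomorphic to $D^{d-2}\times[0,1]$ and is therefore trivially orientable; the bridging hyper-strips in either approach are required by the construction in Section \ref{section_hyper_cylinder_fitting} to be orientable $(d-1)$-hyper-strips.

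Next I would propagate a chosen orientation along the chain of pieces. Starting from a reference hyper-cap $S'_{i_0}$ with its fixed outward radial normal, at each gluing interface between $S'_i$ and an adjacent partial hyper-cylinder $c_{i,j}$ I would orient $c_{i,j}$ so that its unit normal agrees with that of $S'_i$ along the shared boundary sphere, and then transport this normal across $c_{i,j}$ to determine the orientation on the next hyper-cap $S'_j$. Because every intermediate hyper-cap is attached to exactly two neighbors and each connecting partial hyper-cylinder is a smooth untwisted tube embedded in $\mathbb{R}^d$, the orientations selected in this manner are locally consistent across every gluing.

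The main obstacle is global. In the first approach the chain closes back into a loop by Proposition \ref{proposition_hypersurface_homeomorphic_to_strip}, and one must verify that the orientation after traversing the loop once agrees with the initial choice, i.e.\ that $\mathcal{W}$ is homeomorphic to a cylindrical hyper-strip loop and not a M\"obius-type twisted loop. This is precisely where the freedom in choosing the closing hyper-strip $s_{i_a,i_b}$ is used: by Assumption \ref{assumption_dataset_in_ball} the ambient ball leaves sufficient room to place $s_{i_a,i_b}$ as an orientable strip, and among all admissible embeddings we select one whose orientation matches both boundary hyper-caps simultaneously, which rules out an odd number of twists around the loop. In the second approach no loop is formed, so it suffices to orient each of $s_{i_a}$ and $s_{i_b}$ to agree with its unique adjacent boundary hyper-cap, which is immediate. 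The disjoint union \eqref{equation_hypersurface_approach_1} or \eqref{equation_hypersurface_approach_2} then carries a globally defined continuous unit normal field, establishing orientability.
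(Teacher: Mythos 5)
Your proof is correct and follows the same basic strategy as the paper's: decompose $\mathcal{W}$ into hyper-caps, partial hyper-cylinders, and hyper-strips, observe that each piece is orientable and that no gluing introduces a twist, and conclude. Where you go further is in making the argument actually close. The paper's sketch essentially stops at ``every piece is orientable and the connectors are not tilted (unlike the M\"obius strip), hence $\mathcal{W}$ is orientable,'' which elides the one real issue: orientability of the individual pieces never suffices by itself (every chart of a M\"obius band is orientable), so one must check that a chosen orientation propagates consistently across all gluings and, in the first approach, returns to itself after traversing the loop. You identify this monodromy obstruction explicitly, reduce orientability to the existence of a global unit normal field (the right criterion for a codimension-one embedded submanifold of $\mathbb{R}^d$), and discharge the loop case by invoking the freedom in the choice of the closing strip $s_{i_a, i_b}$ --- which is, implicitly, what the paper's ``we choose the hyper-strip to be orientable / we do not tilt them'' is meant to accomplish. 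The two arguments reach the same conclusion by the same decomposition, but yours makes precise the one step the paper leaves as an appeal to the picture; the only caveat is that, like the paper, you ultimately resolve the loop case by \emph{stipulating} that the closing strip is chosen untwisted rather than proving such a choice always exists, though Assumption \ref{assumption_dataset_in_ball} makes that plausible.
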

\begin{proof}[Proof (sketch)]
In the first approach, every hyper-cap is connected to exactly two other hyper-caps (see Fig. \ref{figure_connecting_hypercaps}). Hence, it forms a loop structure in topology. After the surgery of tilted hyper-cylinders (see Fig. \ref{figure_hyper_cylinder}), we do not tilt the partial hyper-cylinders. Hence, the connectors of hyper-caps are not tilted and therefore, they are all orientable (in contrast to the M{\"o}bius strip \cite{mobius1863theorie,mobius1865ueber} which is tilted and non-orientable). The hyper-caps are also orientable. 
We choose the hyper-strip connecting the boundary hyper-caps to be orientable.
Therefore, the resulted $\mathcal{W}$ is orientable. 
Likewise, in the second approach, the hyper-caps and the connecting hyper-cylinders are orientable. We choose the two hyper-strips to be orientable, i.e., we do not tilt them. Hence, the resulted $\mathcal{W}$ is orientable also in the second approach. 
\end{proof}

\begin{corollary}[On compactness]
In both approaches of connecting hyper-caps (see Section \ref{section_hyper_cylinder_fitting}), the obtained hypersurface $\mathcal{W}$ is not compact.   
\end{corollary}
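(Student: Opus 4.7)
The plan is to argue non-compactness separately for the two approaches, leveraging the two standard obstructions to compactness of a subset of $\mathbb{R}^d$: unboundedness and failure to be closed (equivalently, in the paper's convention, the presence of boundary).

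For the second approach, I would simply cite the preceding corollary, which establishes that $\mathcal{W}$ is unbounded because the two hyper-strips $s_{i_a}$ and $s_{i_b}$ extend to infinity. Since every compact subset of $\mathbb{R}^d$ is bounded (Heine--Borel), $\mathcal{W}$ cannot be compact; this case is essentially a one-line argument.

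For the first approach, $\mathcal{W}$ is bounded, so the obstruction must come from elsewhere. I would invoke the first Proposition of this section, which shows that $\mathcal{W}$ is a manifold with boundary, the boundary arising from the diagonally halved partial hyper-cylinders depicted in Fig. \ref{figure_hyper_cylinder}. By the convention recorded at the end of Definition \ref{definition_compact_topological_space}, compact manifolds are manifolds without boundary, so the presence of boundary rules out compactness in the sense used throughout the paper.

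The main obstacle, though minor, lies in the first-approach case if one insists on the strict cover-based definition of compactness, since a bounded manifold with boundary (for instance, a closed disk) can in fact be compact. A cleaner alternative, which would strengthen the argument uniformly across both approaches, is to observe that the disjoint unions in (\ref{equation_hypersurface_approach_1}) and (\ref{equation_hypersurface_approach_2}) do not identify the boundary circles of adjacent hyper-caps, hyper-cylinders, and hyper-strips, so the construction leaves open seams; consequently $\mathcal{W}$ fails to be a closed subset of $\mathbb{R}^d$, and Heine--Borel again precludes compactness. Either framing matches the informal, sketch-style proofs used throughout this section.
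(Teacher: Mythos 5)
Your proposal is correct and is in fact more careful than the paper's own proof, but it takes a genuinely different route. The paper disposes of both approaches with a single argument: by Proposition \ref{proposition_hypersurface_homeomorphic_to_strip}, $\mathcal{W}$ has boundary in either case, and then it appeals to the remark at the end of Definition \ref{definition_compact_topological_space} (``compact manifolds are usually manifolds without boundary'') to conclude non-compactness. You instead split into cases: for the second approach you get a fully rigorous one-liner from unboundedness plus Heine--Borel, which the paper does not use here (it only invokes unboundedness in the separate boundedness corollary); for the first approach you reproduce the paper's boundary-based argument but, rightly, flag that it is not valid under the strict cover definition --- a closed disk, or indeed the hyper-strip loop $S^1 \times [0,1]^{d-2}$ that Proposition \ref{proposition_hypersurface_homeomorphic_to_strip} says $\mathcal{W}$ is homeomorphic to in the first approach, is a compact manifold with boundary. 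This is a real gap in the paper's proof, not just a stylistic quibble, and your alternative repair (non-closedness of $\mathcal{W}$ in $\mathbb{R}^d$ due to the unidentified seams in the disjoint unions (\ref{equation_hypersurface_approach_1}) and (\ref{equation_hypersurface_approach_2})) is the right instinct. The one caveat is that this repair depends on a reading the paper never pins down: the hyper-caps $S'_i$ are defined with a non-strict inequality and hence are closed, so a literal finite disjoint union of closed bounded pieces would be closed and bounded, i.e.\ compact --- in which case the corollary in the first approach is true only under the paper's boundary convention, which is exactly the convention your first framing already leans on. So your two framings together cover the statement as well as it can be covered given the informality of the construction, and the second-approach Heine--Borel argument is an unambiguous improvement over what the paper wrote.
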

\begin{proof}[Proof (sketch)]
According to Proposition \ref{proposition_hypersurface_homeomorphic_to_strip}, $\mathcal{W}$ has boundary in both approaches because both strip loop and hyperplane with boundary have boundary. 
Hence, according to Definition \ref{definition_compact_topological_space}, $\mathcal{W}$ is not compact. 
\end{proof}

\begin{proposition}[On injectivity and invertibility]
In both approaches of connecting hyper-caps (see Section \ref{section_hyper_cylinder_fitting}), the embedding of the obtained hypersurface $\mathcal{W}$ is injective. 
\end{proposition}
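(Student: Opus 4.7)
The plan is to verify injectivity piece by piece and then ensure that no two pieces collide in $\mathbb{R}^d$. Since $\mathcal{W}$ is built as a disjoint union of three kinds of elementary pieces (the hyper-caps $S'_i$, the partial hyper-cylinders $c_{i,j}$, and either the hyper-strip $s_{i_a,i_b}$ or the pair $s_{i_a}, s_{i_b}$ for the two approaches), injectivity of the inclusion $\mathcal{W} \hookrightarrow \mathbb{R}^d$ decomposes into two claims: (i) each elementary piece is injectively embedded in $\mathbb{R}^d$, and (ii) distinct pieces have pairwise disjoint images (up to the identifications already built into the disjoint union).

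First I would handle (i). Each $S'_i$ is a geodesic cap on the $(d-1)$-sphere $S_i \subset \mathbb{R}^d$ of radius $\delta$, hence a smooth embedded open neighborhood of $\b{x}_i$ on $S_i$, and is therefore injectively embedded. Each $c_{i,j}$ is a portion of a straight (tilted) cylinder in $\mathbb{R}^d$ after the diagonal surgery, hence again a smoothly embedded $(d-1)$-manifold with boundary. Finally the hyper-strips are chosen a priori as smooth embeddings. These facts are essentially by construction, so (i) is routine.

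The substantive step is (ii). For the hyper-caps, choose $\delta$ strictly smaller than half the minimum pairwise distance $\min_{i \neq j}\|\b{x}_i - \b{x}_j\|$; then the closed $\delta$-balls around the $\b{x}_i$ are disjoint, and since $S'_i \subset \{\b{x} : \|\b{x} - \b{x}_i\| \leq \delta\}$, the caps are pairwise disjoint. For the connecting cylinders $c_{i,j}$, each is supported on the straight segment $[\b{x}_i, \b{x}_j]$ enlarged transversely by width of order $\delta$; by choosing the edge set $\mathcal{E}$ in Section \ref{section_hyper_cylinder_fitting} so that these segments form a simple (non self-intersecting) path through the $n$ points, and by taking the transverse thickness sufficiently small, one can ensure that the $c_{i,j}$ are mutually disjoint in their interiors and meet the hyper-caps only along the intended gluing circles inherited from the surgery (which is the whole purpose of the surgery step). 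The hyper-strips are then chosen, as already noted in Section \ref{section_hyper_cylinder_fitting}, to avoid all previously placed pieces; since Assumption \ref{assumption_dataset_in_ball} places the whole configuration inside a bounded ball and $\mathbb{R}^d$ has ample room around it (codimension is one), such a non-intersecting strip can always be drawn in the first approach, and two disjoint non-intersecting strips going out to opposite infinities can be drawn in the second.

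The main obstacle is showing that the transverse widths of the cylinders and the placement of the hyper-strips can simultaneously be made small enough to avoid all collisions while still matching the prescribed boundaries on the hyper-caps; this is really a soft general-position argument rather than a calculation, and it relies crucially on the fact that we are in codimension one with bounded data, so the complement of the finite union of caps and chosen segments in $\mathbb{R}^d$ is connected and open. Once these choices are made, the inclusion $\mathcal{W} \hookrightarrow \mathbb{R}^d$ sends distinct points to distinct points, and injectivity of the embedding follows.
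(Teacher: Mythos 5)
Your proposal takes essentially the same route as the paper's own proof (sketch): injectivity is secured by exploiting the freedom in choosing the edge set $\mathcal{E}$ and the placement of the connecting pieces so that $\mathcal{W}$ does not intersect itself. Your version is in fact more explicit than the paper's --- the decomposition into per-piece injectivity plus pairwise disjointness, the choice of $\delta$ below half the minimum pairwise distance, and the simple-path ordering of $\mathcal{E}$ all make precise what the paper only asserts as ``we have freedom in choosing $\mathcal{E}$'' --- so it is a faithful, somewhat strengthened rendering of the same argument.
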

\begin{proof}[Proof (sketch)]
In the first approach, the strip loop does not intersect itself because we choose the set $\mathcal{E}$ in a way that every hyper-cap is connected to exactly two other hyper-caps and as we have freedom in choosing $\mathcal{E}$, we can choose it in a way so the resulted $\mathcal{W}$ does not intersect itself. Similar discussion holds for the second approach where the obtained hypersurface $\mathcal{W}$ does not intersect itself. Hence, the embedding of the $(d-1)$-hypersurface in $\mathbb{R}^d$ is invertible and injective.
\end{proof}

\section{Induction for Extension to Lower Submanifold Dimensionalities}\label{section_induction}

So far, we showed that manifold hypothesis holds for the embedding dimensionality of at least $d-1$ where $d$ is the dimensionality of data. Now, we extend the embedding dimensionality to lower dimensionalities using induction.
Consider a dataset in the $d$-dimensional Euclidean space, i.e., $\mathbb{R}^d$. 
The induction goes on as follows:
\begin{itemize}
\item Step 1: Using the approach explained in Section \ref{section_hypersurface_embedding}, we can fit an embedding hypersurface with local dimensionality $d-1$; hence, the dataset can be embedded on a $(d-1)$-dimensional hypersurface. 
\item Step 2: We represent the $d$-dimensional data points of the dataset in the obtained space of the $(d-1)$-dimensional embedded space. For this representation, we perform as follows. Note that the $(d-1)$-dimensional fitted hypersurface is homeomorphic to the $(d-1)$-dimensional Euclidean space, $\mathbb{R}^{d-1}$. Then, we represent the points of dataset in the $\mathbb{R}^{d-1}$ which is homeomorphic to the fitted hypersurface. 
\item Step 3: Now, we have a represented dataset with dimensionality $d-1$. In the obtained $\mathbb{R}^{d-1}$ for representation of data, we go to step 1 but with dimensionality $d-1$. Hence, we can fit an embedding hypersurface with local dimensionality $d-2$ so the dataset can be embedded on a $(d-2)$-dimensional hypersurface. 
\item Step 4: Similar to step 2, we represent the $(d-1)$-dimensional data points of the dataset in the obtained space of the $(d-2)$-dimensional embedded space. 
\item We continue this procedure by embedding a $(d-j)$-dimensional dataset on a $(d-j-1)$-dimensional hypersurface, for all $j \in \{0, 1, \dots, d-2\}$. 
\item The base of induction is embedding data on a $1$-hypersurface whose local dimensionality is one. In this case, the hypersurface is a curve passing through the data points in the Euclidean space. 
\end{itemize}
An illustration of this induction can be found in Fig. \ref{figure_induction}. As this figure shows, the induction forms a hierarchy or a pyramid of submanifold embeddings. 

\begin{figure}[!h]
\centering
\includegraphics[width=5.5in]{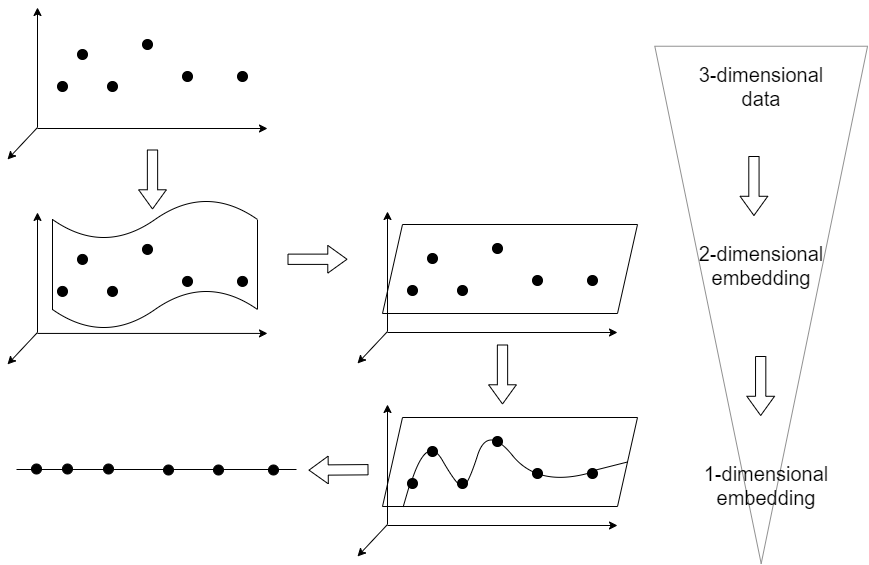}
\caption{Illustration of the induction in a pyramid structure for embedding three dimensional data points. First, a $2$-hypersurface is fitted to data. Then, we represent dataset as two dimensional points in the homeomorphic $\mathbb{R}^2$. Then, a $1$-hypersurface, i.e. a curve, is fitted to data in the $\mathbb{R}^2$. Note that the fitted $1$-hypersurface can also be homeomorphic to $\mathbb{R}^1$ for one dimensional representation of data.}
\label{figure_induction}
\end{figure}

\begin{corollary}
As the above induction states, we can embed a $d$-dimensional dataset on multiple hypersurfaces with local dimensionalities $\{1, 2, \dots, d-1\}$. Therefore, the manifold hypothesis holds for all submanifold dimensionalities less than $d$. 
\end{corollary}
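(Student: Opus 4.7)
The plan is to formalize the bulleted induction scheme that immediately precedes the corollary into a clean induction on $j \in \{0, 1, \dots, d-2\}$, where at stage $j$ we produce a submanifold $\mathcal{W}_j$ of local dimension $d-j-1$ containing the dataset $\mathcal{D}$. The base case $j=0$ is already settled: applying the construction of Section \ref{section_hypersurface_embedding} to $\mathcal{D} \subset \mathbb{R}^d$ produces $\mathcal{W}_0$ of local dimension $d-1$, and Eqs. (\ref{equation_hypersurface_approach_1})--(\ref{equation_hypersurface_approach_2}) together with Definition \ref{definition_embedded_hypersurface} certify manifold hypothesis at dimension $d-1$.

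For the inductive step, I would assume $\mathcal{W}_j$ has been constructed with $\mathcal{D} \subset \mathcal{W}_j$ and $\mathcal{W}_j$ locally homeomorphic to $\mathbb{R}^{d-j-1}$. I would then pick a collection of charts of $\mathcal{W}_j$ covering the finitely many points of $\mathcal{D}$ and assemble them into a representation $\phi : \mathcal{D} \to \mathbb{R}^{d-j-1}$ of the dataset in the lower-dimensional Euclidean space; this represented dataset $\phi(\mathcal{D})$ plays the role of $\mathcal{D}$ in the next application of Section \ref{section_hypersurface_embedding}. One must check that the working assumption $n > d-j-1$ remains valid for every $j \le d-2$, which is immediate once $n$ is large enough (and can be made an explicit hypothesis of the corollary if needed). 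Applying Section \ref{section_hypersurface_embedding} to $\phi(\mathcal{D}) \subset \mathbb{R}^{d-j-1}$ then produces $\mathcal{W}_{j+1}$ of local dimension $d-j-2$. Iterating from $j=0$ to $j=d-2$ yields hypersurfaces of every local dimension in $\{1,2,\dots,d-1\}$, which is exactly what the corollary asserts.

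The main obstacle I anticipate is the passage from $\mathcal{W}_j$ back to a Euclidean ambient space in step 2 of the informal procedure. Proposition \ref{proposition_hypersurface_homeomorphic_to_strip} only provides a global homeomorphism of $\mathcal{W}_j$ to a hyper-strip loop or an unbounded hyper-strip, neither of which is the whole of $\mathbb{R}^{d-j-1}$; so a naive global identification of $\mathcal{W}_j$ with $\mathbb{R}^{d-j-1}$ is not literally available. The cleanest way around this is to use the local Euclidean structure: because $\mathcal{D}$ is a finite set of points on $\mathcal{W}_j$ and $\mathcal{W}_j$ is locally homeomorphic to $\mathbb{R}^{d-j-1}$, one can always construct a single smooth chart (or a disjoint union of charts embedded into a common $\mathbb{R}^{d-j-1}$) whose image contains $\phi(\mathcal{D})$ in a way compatible with the subsequent $d$-nearest-neighbours construction. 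Once this representation map is in place, every remaining ingredient — fitting osculating hyperspheres, surgery to hyper-caps, and hyper-cylinder stitching — is inherited verbatim from Section \ref{section_hypersurface_embedding}, and the induction closes at the base $\mathcal{W}_{d-2}$, which is a $1$-dimensional curve through the data.
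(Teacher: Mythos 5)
Your proposal follows essentially the same route as the paper: the paper offers no separate proof of this corollary, relying entirely on the bulleted induction scheme that precedes it, and your argument is a formalization of exactly that scheme (apply Section \ref{section_hypersurface_embedding}, re-represent the data in $\mathbb{R}^{d-j-1}$, repeat). Where you go beyond the paper is in step 2: the paper simply asserts that the fitted hypersurface ``is homeomorphic to $\mathbb{R}^{d-1}$,'' which is in tension with Proposition \ref{proposition_hypersurface_homeomorphic_to_strip} (a hyper-strip loop is not globally homeomorphic to Euclidean space), whereas you correctly flag this and repair it by using only the local Euclidean structure --- a chart, or a compatible family of charts, covering the finitely many data points. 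That patch is the right one and is what actually makes the induction close; your observation that the sample-size assumption $n>d-j-1$ persists down the pyramid is likewise a detail the paper leaves implicit. The one point you should still make explicit is that the chart(s) you choose must preserve enough of the metric/neighbourhood structure for the $d$-nearest-neighbours step at the next level to be meaningful (a homeomorphism alone does not preserve nearest neighbours), but at the level of rigour the paper itself operates, your argument matches and slightly strengthens its intended proof.
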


\begin{remark}
One or several of the local dimensionalities $\{1, 2, \dots, d-1\}$ are the best embedding dimensionalities for the best representation of data or discrimination of classes. Although manifold hypothesis holds for all the dimensionalities $\{1, 2, \dots, d-1\}$, finding the best embedding dimensionality among these is out of scope of the manifold hypothesis and requires an algorithm. Various algorithms and methods, such as the scree plot \cite{cattell1966scree}, exist for finding the best embedding dimensionality. 
\end{remark}

\section{Discussions on the Linearity and Structure of Data}\label{section_discussions}

\subsection{Discussion on the Linearity of Data}

\hfill\break
\hfill\break
The procedure of fitting a hypersurface to data can also be interpreted as a kind of nonlinear regression. 
Note that if the data points actually lie on a linear hyperplane, we will have flat hyper-caps and the fitted hyperplane is the embedded hypersurface itself. In this case of linearity, only a few of the osculating hyperspheres are sufficient for fitting the hyperplane and the other osculating hyperspheres will be redundant. Hence, the linear case of our method is relevant to few-shot learning \cite{wang2020generalizing} in which a few data instances are used for learning. 

\subsection{Discussion on the Structure of Data}

\hfill\break
\hfill\break
Another discussion we can have is that the less curvature the fitted hypersurfaces in the induction have, the larger the osculating hyperspheres will be. This is because the curvature of the hypersurface is reciprocal to the radius of the osculating hypersphere (see Definition \ref{definition_osculating_hypersphere}). 
If data points do not have a specific structure, such as white noise, the hypersurface needs to pass through all the points which do not have a specific structure. In this case, the curvature of the fitted hypersurface is very large at different parts of the hypersurface. Hence, if most of the osculating hyperspheres have a small radius, the dataset seems more like the white noise, without any specific structure. 


\section{Conclusion}\label{section_conclusion}

This was a paper concentrating on the manifold hypothesis which states that data points lie on an embedded submanifold. We showed that the manifold hypothesis holds at least for the embedding dimensionality $d-1$. 
We found this embedded hypersurface by fitting an osculating hypersphere to every points, using its neighbors, then applying surgery on hyperspheres to obtain hyper-caps, and connecting the hyper-caps using partial hyper-cylinders. We also discussed the geometrical characteristics of the embedded hypersurface. 
We extended the embedding dimensionality to lower embedding dimensionalities to show the validity of manifold hypothesis for embedding dimensionalities $\{1, 2, \dots, d-1\}$. Some discussion were also provided for the linearity and structure of data. 
A possible future work is to discuss manifold hypothesis and the proposed method for generalization to out-of-sample data.  

\section*{Acknowledgement}

We hugely thank Prof. Spiro Karigiannis at the Department of Pure Mathematics in the University of Waterloo for the fruitful discussions about this paper. 

\bibliographystyle{amsalpha}
\bibliography{refs}



\hfill\break
Benyamin Ghojogh\\
Department of Electrical and Computer Engineering, University of Waterloo, Canada\\
\texttt{bghojogh@uwaterloo.ca}

Fakhri Karray\\
Department of Electrical and Computer Engineering, University of Waterloo, Canada\\
\texttt{karray@uwaterloo.ca}

Mark Crowley\\
Department of Electrical and Computer Engineering, University of Waterloo, Canada\\
\texttt{mcrowley@uwaterloo.ca}

\end{document}